\documentclass{article}
\usepackage{ijcai26}

\usepackage{balance}
\usepackage{eso-pic}

\newcommand{\acceptednotice}{%
Accepted for publication in the Proceedings of the 35th International Joint Conference on Artificial Intelligence (IJCAI 2026)%
}

\usepackage{times}
\usepackage[utf8]{inputenc}
\usepackage[small]{caption}
\usepackage{graphicx}
\usepackage{amsmath}
\usepackage{amsthm}
\usepackage{booktabs}
\usepackage[switch]{lineno}

\usepackage{url}
\usepackage[hidelinks]{hyperref}       
\usepackage{booktabs}       
\usepackage{amsfonts}       
\usepackage{nicefrac}       
\usepackage{microtype}      
\usepackage{xcolor}         
\usepackage{latexsym}
\usepackage{amsthm}
\usepackage{enumitem}
\usepackage{color}
\usepackage{times}
\usepackage{soul}
\usepackage{algorithm}
\usepackage{algpseudocode}
\usepackage[normalem]{ulem} 
\usepackage{subcaption}
\usepackage{multirow}%
\usepackage{mathrsfs}%
\usepackage{dsfont}
\usepackage{textcomp}%
\usepackage{manyfoot}%
\usepackage{tabularx}

\usepackage{tikz}
\usepackage{dblfloatfix} 
\usetikzlibrary{arrows.meta,positioning,fit,calc}

\newtheorem{theorem}{Theorem}

\title{LLMs Uncertainty Quantification via Adaptive Conformal Semantic Entropy}

\author{
Hamed Karimi$^1$
\and
Vaishali Meyappan$^1$\and
Reza Samavi$^{1,2}$\\
\affiliations
$^1$
Toronto Metropolitan University, 
Toronto, Ontario, Canada\\
$^2$Vector Institute, Toronto, Ontario, Canada\\
\emails
\{hamed.karimi, 
vaishali.meyappan, samavi\}@torontomu.ca 
}

\begin{document}

\maketitle

\AddToShipoutPictureFG{%
  \AtPageUpperLeft{%
    \raisebox{-0.25in}{%
      \makebox[\paperwidth][c]{\scriptsize \acceptednotice}%
    }%
  }%
}



\begin{abstract}
LLMs' overconfidence, particularly when hallucinating, poses a significant challenge for the deployment of the models in safety-critical settings and makes a reliable estimation of uncertainty necessary. Existing approaches for uncertainty quantification typically prioritize lexical or probabilistic measures; however, these techniques often ignore the semantic variance of different responses with similar meaning. In this paper, we propose Adaptive Conformal Semantic Entropy (ACSE), a method for estimating prompt-level uncertainty by adaptively measuring semantic dispersion in LLMs outputs. 
Our uncertainty scoring function is based on clustering semantic entropy of multiple diverse responses to the same prompt. The function adaptively adjusts the uncertainty score based on semantic features of each cluster. To ensure statistical reliability of our score, we use conformal calibration to apply a decision rule to accept/abstain the prompts, providing a finite-sample, distribution-free guarantee such that the error rate among the accepted responses remains bounded by a user-specified tolerance.
Our extensive experimental evaluations using different LLMs and datasets, demonstrate that our approach consistently outperforms state-of-the-art uncertainty quantification baselines using discriminative performance, conformal guarantees, and probabilistic calibration indicators. As a highlight, for TriviaQA dataset, AUROC of our approach is $0.88$ compared to $0.65$ produced by the token entropy approach. 
\end{abstract}

\section{Introduction} \label{intro}

Despite the wide range application of Large Language Models (LLMs), they remain uncertain in their predictions and often generate incorrect or misleading outputs with high confidence~\cite{shanahan2024talking}. This overconfidence limits their safe deployment in high-stakes domains such as healthcare~\cite{thirunavukarasu2023large}, law~\cite{teubner2023welcome}, and scientific research~\cite{liu2023summary}. Reliable uncertainty quantification (UQ), defined as the process of estimating the model's confidence in its own predictions, is therefore essential for improving model reliability, safety, and trustworthiness~\cite{chang2024survey}.

\paragraph{Related Work.}
Most existing methods for uncertainty estimation in LLMs rely on token-level signals, such as entropy over next-token distributions~\cite{duan2023shifting,fadeeva2024fact,yarie2024mitigating} or average sequence log-likelihood~\cite{yao2019quality,huang2023look}. 
While computationally efficient, these approaches capture only surface-level lexical variability rather than uncertainty over meaning~\cite{shorinwa2025survey}. 
In autoregressive generation, LLMs often assign high probability to multiple near-synonymous tokens, producing outputs that are probabilistically coherent yet semantically ambiguous or incorrect. As a result, token-level entropy cannot distinguish superficial phrasing variation from genuine semantic ambiguity. In particular, entropy may remain low even when probability mass is spread across responses with mutually inconsistent meanings, creating a misleading impression of confidence~\cite{brown2020language,holtzman2020curious}. 

More recent work has shifted toward semantic-level uncertainty estimation, notably Semantic Entropy (SE)~\cite{kuhn2023semantic}, which aggregates uncertainty by clustering semantically equivalent responses using Natural Language Inference. However, SE relies on hard cluster assignments, forcing each response into a single cluster and ignoring overlapping semantic regions, which can yield unstable uncertainty estimates. To improve reliability, conformalized approaches~\cite{kaur2024addressinguncertaintyllmsenhance} employ dynamic clustering with finite-sample guarantees, while the Conformalized Abstention Policy~\cite{tayebati2025learning} integrates reinforcement learning to learn instance-specific risk thresholds. Despite their guarantees, these methods treat uncertainty as a scalar signal and remain insensitive to cluster brittleness. As a result, they are vulnerable to the wrong-consensus trap, in which lexically consistent but factually incorrect response clusters satisfy conformal thresholds, masking underlying semantic error.

\begin{figure*}[t]
\centering
\includegraphics[width=1.0\textwidth]{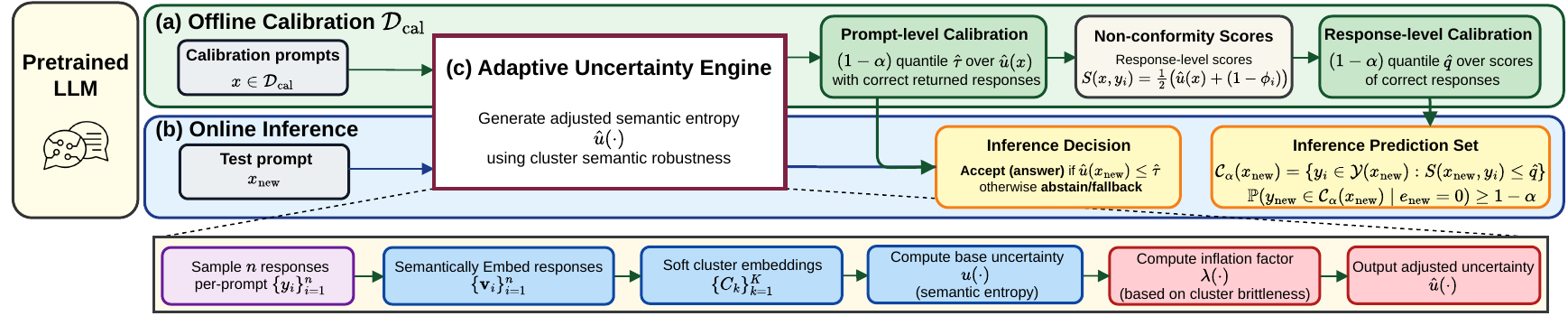}
\caption{ACSE Pipeline. (a) To calibrate a pretrained LLM, for each prompt $x \in \mathcal{D}_{cal}$, we use an Adaptive Uncertainty Engine (AUE) to compute an adjusted uncertainty score, $\hat u(x)$. The $\hat u(x)$ is then used to compute a cutoff for a conformalized decision rule (prompt-level) and  prediction set in response-level for the user defined upper bound error guarantee of $\alpha$. (b) In inference time, for a new prompt, the same AUE is used to compute $\hat u(x)$ to compare with the calibration cutoff. (c) AUE starts with sampling $n$ responses, embeds them, and soft-clusters the embeddings, leading to the computation of base semantic entropy $u(x)$.} 
\label{fig:pipeline}
\end{figure*}

\paragraph{Proposal.} To address these limitations, we propose \textit{Adaptive Conformal Semantic Entropy} (ACSE), a model-independent method that quantifies uncertainty in pretrained LLMs based on dispersion in meaning. Rather than relying on token-level signals, ACSE estimates prompt-level uncertainty in a continuous semantic vector space, capturing dispersion in response meaning. For a finite set of representative prompts (e.g., 1,000 calibration prompts), we sample multiple responses per prompt, embed each response using a sentence encoder whose cosine similarity reflects semantic proximity~\cite{reimers2019sentencebert}, and cluster the embeddings into coherent semantic groups. We then estimate probability mass for each group via soft assignments and derive a base uncertainty score from the normalized entropy of the resulting distribution. This base semantic uncertainty reflects the degree of meaning dispersion: higher dispersion--corresponding to multiple semantically distinct responses--indicates greater uncertainty, whereas semantically consistent responses imply lower uncertainty~\cite{yadkori2024believe}. We subsequently adjust the uncertainty score when cluster structure indicates brittleness, such as weak support for the dominant cluster, high internal diversity, or other risk patterns that empirically correlate with errors under distribution shift. Finally, we calibrate the adjusted uncertainty score using conformal prediction on held-out calibration prompts~\cite{vovk2005algorithmic} to learn a cutoff that serves two purposes: (1) making prompt-level answer/abstain decisions and (2) generating response-level prediction sets to support user decision-making. 
Figure~\ref{fig:pipeline} provides an overview of the approach, which is illustrated by a toy example.

\paragraph{Toy Example.}
Consider a user querying a pretrained geographical LLM with the prompt \emph{What is the capital of Australia?}. The model returns \emph{Sydney} with probability $70\%$ (high confidence) despite its semantic incorrectness, and \emph{Canberra} with probability $30\%$ (low confidence). Selecting the most likely sequence yields a \emph{wrong consensus}, a common failure mode in open-ended generation.
Assume the user has calibrated the model on a held-out set of geographical prompts and identified $0.58$ as a conformity cutoff corresponding to $90\%$ confidence under the assumption that new prompts are drawn from the same distribution~\cite{vovk2005algorithmic}. For a new prompt, ACSE samples $10$ responses using fixed decoding settings: $9$ contain \emph{Sydney} and $1$ contains \emph{Canberra}. These responses are embedded, clustered by meaning, and assigned soft semantic memberships via normalized cosine similarity to cluster centroids. Since assignments are probabilistic, clusters are non-disjoint. Averaging memberships across responses yields mean cluster weights of $0.87$ for \emph{Sydney} and $0.13$ for \emph{Canberra}.
The resulting base semantic uncertainty score, given by the normalized entropy of these weights, is $0.55$. Because this score falls below the abstention threshold ($0.55<0.58$), the model would return the dominant \emph{Sydney} cluster. However, semantic entropy alone can be overly optimistic when clusters are internally brittle, e.g., when cluster membership is unstable or weakly representative. Our approach accounts for such brittleness by adjusting the uncertainty score. In this example, incorporating cluster quality factors (Section~\ref{sec:inflatedScore}) increases the score to $0.62$. Since $0.62>0.58$, the model abstains from returning the incorrect answer. While sampling and post-processing incur additional cost, this overhead is justified in safety-critical applications such as using LLMs for mental health support.

\paragraph{Contributions.}
We make the following contributions. First, we propose ACSE, a method for estimating entropy-based uncertainty at the semantic level for a given prompt. Second, we introduce an adjusted uncertainty score that leverages cluster robustness features to adaptively inflate semantic uncertainty, explicitly penalizing ambiguous response semantics and mitigating hallucinations. Third, we incorporate a post-hoc conformal calibration phase on held-out prompts to learn a cutoff over adjusted uncertainty estimates aligned with response correctness and with formal guarantee, yielding decision rules in both prompt-level and response level. Finally, we demonstrate through extensive experiments on open-domain question answering benchmarks that our adaptive semantic-level uncertainty modeling achieves superior discrimination and calibration compared to state-of-the-art baselines. 

\section{Prompt-Level Uncertainty Estimation}
\label{method}

ACSE calibrates a pretrained LLM to a new domain using a finite set of labeled calibration prompts $\mathcal D_{\mathrm{cal}}$ of size $M$. Calibration begins by sampling multiple responses per prompt to assess the consistency of the model's generated meanings.

\subsection{Response Sample Generation}
Let $\mathcal V$ denote the token vocabulary. A generated response is a finite token sequence $y_i=\langle y_{i,1},\dots,y_{i,T_i}\rangle$ terminated by an EOS token in $T_i$ decoding steps. For a fixed input prompt $x\in\mathcal X$, we draw a set of $n$ independent responses $\mathcal{Y}(x)=\{y_1,\dots,y_n\}$ from a language model with parameters $\theta$.
At each decoding step $t$, the model defines a next-token probability distribution over tokens $v\in\mathcal V$ as,
\begin{equation}
\pi_{i,t}(v)=p_\theta\!\left(y_{i,t}=v \mid x, y_{i,1:t-1}\right), \quad \sum_{v\in\mathcal V}\pi_{i,t}(v)=1\ ,
\end{equation}
where $y_{i,t}$ denotes the $t$-th token of response $y_i$. To balance diversity and semantic coherence, we adopt nucleus (top-$\eta$) sampling~\cite{holtzman2020curious}, which dynamically truncates the distribution at each step. Specifically, the nucleus $\Gamma_\eta(x,i,t)$ is the smallest subset of tokens whose cumulative probability exceeds a threshold $\eta\in\mathbb{R}^{(0,1]}$,
\begin{equation}
\Gamma_\eta(x,i,t)=\operatorname*{arg\,min}_{L\subseteq\mathcal V}
\left\{|L|:\sum_{v\in S}\pi_{i,t}(v)\ge\eta\right\}.
\end{equation}
The truncated distribution is obtained by renormalizing over $\Gamma_\eta$,
\begin{equation}
\pi_{i,t}^{(\eta)}(v)=
\begin{cases}
\displaystyle
\frac{\pi_{i,t}(v)}{\sum_{v\in\Gamma_\eta(x,i,t)}\pi_{i,t}(v)}, & v\in\Gamma_\eta(x,i,t),\\[10pt]
0, & \text{otherwise}.
\end{cases}
\end{equation}
Tokens are sampled as $y_{i,t}\sim\pi_{i,t}^{(\eta)}(\cdot)$ and appended until EOS or a length limit is reached. Repeating this procedure yields the response set $\mathcal Y(x)$.
The variability of $\mathcal Y(x)$ provides an empirical proxy for model confidence. When the model is confident, stochastic decoding produces lexically diverse but semantically equivalent responses; when uncertain, samples diverge into multiple, often conflicting, meanings. The sample size $n$ controls a trade-off between computational cost and resolution: larger $n$ captures finer-grained semantic variation, while smaller $n$ suffices when responses concentrate on a single meaning. In the next step, we quantify this dispersion by embedding $\mathcal Y(x)$ into a semantic space where cosine similarity reflects semantic proximity.

\subsection{Semantic Embedding}
We represent each sampled response by its underlying meaning rather than its surface form, so that paraphrases are embedded nearby while semantically distinct answers are well separated. This representation enables reliable grouping of responses into semantic clusters for downstream uncertainty estimation. 
Each generated response $y_i \in \mathcal{Y}(x)$ is mapped into a continuous semantic space using a pre-trained sentence encoder $f:\mathcal{Y}\rightarrow\mathbb{R}^d$, producing embeddings 
\begin{equation}
\mathcal{E} = \left\{ \mathbf{v}_i = f(y_i) \in \mathbb{R}^d \;\middle|\; y_i \in \mathcal{Y}(x) \right\} .
\end{equation}
We $\ell_2$-normalize these embeddings such that $\|\mathbf{v}_i\|_2=1$ and measure similarity using cosine geometry~\cite{cer2018universal,reimers2019sentencebert,gao2021simcse}. This choice reflects semantic relatedness across lexical and syntactic variation, provides bounded and scale-invariant similarity scores, and yields more stable neighborhoods in high-dimensional spaces than Euclidean distance. 
The resulting embeddings capture the latent semantic structure of model outputs and form the basis for measuring dispersion in meaning. The effectiveness of this representation depends on selecting a sentence encoder that reliably separates semantic similarity from surface variation.
Using the embeddings $\mathcal{E}$, we define a binary error function $e:\mathcal{X}\times\mathcal{Y}\rightarrow\{0,1\}$ that assigns error labels to prompt-response pairs where $e(x,y_i)=\mathds{1}\{y_i \text{ is incorrect}\}$. Labels are determined automatically via semantic matching~\cite{cer2018universal,reimers2019sentencebert}. Given a reference response $\hat y$ for prompt $x$ and a cosine similarity threshold $\tau_{\text{cos}}\in(0,1)$, a response $y_i$ is labeled correct if $\cos(\mathbf{v}_{y_i},\mathbf{v}_{\hat y}) \ge \tau_{\text{cos}}$. In the following, we leverage semantic clusters of $\mathcal{E}$ to derive soft assignments that preserve the semantic ranking induced by the encoder.

\subsection{Soft Clustering of Semantic Embedding}
To transform the unstructured distribution of response embeddings into a meaningful uncertainty signal, we aggregate semantically proximal embeddings into clusters that represent distinct meanings. This serves two purposes: (1) collapsing lexically diverse but semantically equivalent responses into a single equivalence class, and (2) exposing higher-level semantic structure whose relative prevalence reflects model uncertainty.
We apply Hierarchical Agglomerative Clustering (HAC)~\cite{murtagh2017algorithms} to the embedding set $\mathcal E$ under cosine geometry. Using unit-normalized embeddings, cosine dissimilarity reduces to
\begin{equation}
\mathrm{dist}(\mathbf v,\hat{\mathbf v}) = 1-\mathbf v^\top \hat{\mathbf v}\ , \quad \text{s.t.} \quad \|\mathbf v\|_2=\|\hat{\mathbf v}\|_2=1\ .
\end{equation}
HAC constructs a dendrogram by iteratively merging clusters with minimum average pairwise dissimilarity (average linkage), balancing sensitivity to local neighborhoods and global coherence. Cutting the dendrogram at a threshold $\epsilon\in\mathbb{R}^{(0,1)}$ yields a set of $K$ semantically coherent clusters 
\begin{equation}
\mathcal{J}=\{C_1,\dots,C_K\}\ , \qquad \text{where} \qquad \bigcup_{k=1}^K C_k=\mathcal E\ .
\end{equation}
Lower $\epsilon$ enforces strict semantic agreement and higher $\epsilon$ allows broader conceptual grouping. This adaptive thresholding aligns cluster granularity with the intrinsic density of the embedding space.

To avoid overstating certainty near cluster boundaries, we adopt soft cluster assignments. Each cluster $C_k$ is represented by a centroid $\mathbf c_k=\sum_{\mathbf v\in C_k}\mathbf v / |C_k|$.
For a response embedding $\mathbf v_i$, we compute its similarity to each centroid as,
\begin{equation}
a_{ik}=\frac{1}{2}\Big ( 1+\cos(\mathbf v_i,\mathbf c_k)\Big )\ \in \mathbb{R}^{[0,1]}\ ,
\label{centroid_sim}
\end{equation}
which is a parameter-free, monotonic transformation that preserves angular geometry without temperature tuning. Then, normalizing across clusters yields soft assignments
\begin{equation}
s_{ik}=\frac{a_{ik}}{\sum_{k=1}^K a_{ik}}\ , \qquad \text{s.t.} \qquad \sum_{k=1}^K s_{ik}=1\ ,
\end{equation}
where $s_{ik}$ quantifies the degree to which response $y_i$ supports semantic cluster $C_k$. Assignments concentrate when a response aligns strongly with one cluster, and disperse when it lies between competing meanings, preserving signals of semantic ambiguity.
Finally, we aggregate soft assignments across the $n$ sampled responses for prompt $x$ to obtain a prompt-level distribution over meanings $\mathcal P(C_k)={n}^{-1}\sum_{i=1}^n s_{ik}$ where $\sum_{k=1}^K \mathcal P(C_k)=1$.
We define the \emph{Semantic Entropy (SE)} as,
\begin{equation}
\mathcal H_{\mathrm{sem}}(x)=-\sum_{k=1}^K \mathcal P(C_k)\log \mathcal P(C_k)\ ,
\end{equation}
and normalize it to ensure comparability across prompts with different $K$,
\begin{equation}
u(x)=
\begin{cases}
\displaystyle \frac{\mathcal H_{\mathrm{sem}}(x)}{\log K}\ , & K\ge2\ ,\\[6pt]
0\ , & K=1\ .
\end{cases}
\label{eq:cse_normalized}
\end{equation}
The normalized score $u(x)\in\mathbb{R}^{[0,1]}$ quantifies semantic dispersion: low values indicate strong agreement on a single meaning, while high values reflect uncertainty arising from competing semantic interpretations.

\section{Conformalized Adaptive Semantic Entropy} 
\label{conformal_cse}

\paragraph{Conformal Prediction.}
Conformal Prediction (CP) is a distribution-free framework for calibrating set-valued predictions with finite-sample guarantees. Given a calibration dataset $\mathcal D_{\mathrm{cal}}=\{(x_i,y_i)\}_{i=1}^M$ and a test point $(x_{M+1},y_{M+1})$ drawn from the same distribution, CP ensures
\begin{equation}
\mathbb P\!\left(y_{M+1}\in\mathcal C(x_{M+1})\right)\ge 1-\alpha\ ,
\label{}
\end{equation}
where $\alpha\in\mathbb{R}^{(0,1)}$ is the miscoverage level and $\mathcal C$ is the prediction set~\cite{vovk2005algorithmic}. CP relies on a non-conformity score $S:\mathcal X\times\mathcal Y\to\mathbb R^+$ that measures how poorly a candidate output conforms to the calibration data. For confidence level $1-\alpha$, the prediction set is
\begin{equation}
\mathcal C(x)=\{\hat y\in\mathcal Y:\ S(x,\hat y)\le\tau_\alpha\}\ ,
\label{}
\end{equation}
where $\tau_\alpha$ is the empirical $(1-\alpha)$-quantile of calibration scores. This guarantee is model- and distribution-agnostic with exchangeability assumption, i.e., the joint distribution is invariant under permutations of the calibration samples. We refer to~\cite{angelopoulos2023conformal} for a comprehensive treatment.
We adapt CP calibration to LLMs for two complementary purposes: (1) to derive a prompt-level decision rule that determines whether to answer or abstain, and (2) to construct a response-level prediction set over sampled outputs that supports user decision-making by identifying responses satisfying conformal coverage. Achieving these goals hinges on a critical component of CP: the fidelity of the non-conformity score. Since the quality of conformal prediction sets depends directly on this score, we introduce an adjusted uncertainty score tailored to LLMs (Section~\ref{sec:inflatedScore}).


\subsection{Adjusted Semantic Uncertainty}
\label{sec:inflatedScore}
Although the base semantic uncertainty $u(x)$ in~\eqref{eq:cse_normalized} captures dispersion across meanings, it does not account for structural properties of semantic clusters that correlate with prediction difficulty. In particular, $u(x)$ can underestimate uncertainty in low-sample regimes, semantically heterogeneous clusters with weak separation, and sparsely supported clusters that are vulnerable to distribution shift. To address this, we amplify $u(x)$ using the \emph{odds of uncertainty}, $O(u,x)=u(x)/(1-u(x))$, which represents the uncertainty-confidence ratio. To capture cluster brittleness, we define a prompt-level inflation factor $\lambda:\mathcal X\to\mathbb{R}^{[1,\lambda_{\max}]}$ to scale these odds such that $O(\hat u,x)=\lambda(x) \cdot O(u,x)$, leading to the adjusted uncertainty
\begin{equation}
\hat u(x)=\frac{\lambda(x)\,u(x)}{1+\bigl(\lambda(x)-1\bigr)\,u(x)}\ \in\mathbb{R}^{[u(x),\,1]}\ .
\label{eq_inflated_uncer}
\end{equation}
This bounded, monotone, and order-preserving transform ensures $\hat u(x)\ge u(x)$ with equality iff $\lambda(x)=1$ or $u(x)\in\{0,1\}$. Inflation increases with both $u$ and $\lambda$, peaking at intermediate $u(x)$ values to prioritize sensitive borderline decisions. Unlike response-level error labels $e(x,y)$, the inflation factor $\lambda(x)$ operates strictly at the prompt level by aggregating properties of the full response set $\mathcal Y(x)$.

To make $\lambda(x)$ interpretable and data-driven, we synthesize five normalized, prompt-level robustness features from the embedding geometry using $\ell_2$-normalized response embeddings $\{\mathbf{v}_i\}_{i=1}^n$, soft assignments $s_{ik}$ of response $y_i$ to cluster $C_k$, and the agglomerative cluster set $\mathcal{J}$ with unit-norm centroids $\{\mathbf{c}_k\}_{k=1}^K$, where $k^\star=\arg\max_k \mathcal P(C_k)$ denotes the dominant cluster and ${i^\star}=\arg\max_i s_{ik^\star}$ denotes the response most representative of the dominant cluster $C_{k^*}$.

\paragraph{(1) Semantic Entropy.}
The normalized semantic entropy $u(x)$ measures multimodality across clusters and serves as the primary ambiguity signal. Larger $u(x)$ directly increases $\lambda(x)$, inflating uncertainty when probability mass is spread across competing meanings.

\paragraph{(2) Centroid Distance.}
We assess if the dominant response is geometrically supported by its cluster using the centroid similarity $a_{i^\star k^\star}$ 
in~\eqref{centroid_sim}. The distance $\tilde a(x)=1-a_{i^\star k^\star}$ is small when the representative response lies near the cluster center and large when it is atypical, indicating structural instability. We set $\tilde a(x)\propto\lambda(x)$ so that uncertainty increases whenever the dominant prediction lacks sufficient geometric support.

\paragraph{(3) Dominant Cluster Dispersion.}
To quantify internal coherence, we define the intra-cluster dispersion
\begin{equation}
d_{k^\star}(x) = \frac{1}{2|C_{k^\star}|}\sum_{\mathbf v\in C_{k^\star}} \Bigl(1-\cos\!\big(\mathbf v,\,\mathbf c_{k^\star}\big)\Bigr)\ \in\mathbb{R}^{[0,1]}\ .
\label{}
\end{equation}
Low values indicate tight semantic agreement, while high values reflect weak internal consistency; $d_{k^\star}(x)$ increases $\lambda(x)$ proportionally; thus, uncertainty is inflated whenever the dominant cluster lacks internal coherence.

\paragraph{(4) Dominant Cluster Size.}
To penalize fragile consensus supported by few samples, we define the sparsity feature
\begin{equation}
g_{k^\star}(x) = \min\Bigl\{1,\ \frac{\kappa}{|C_{k^\star}|}\Bigr\}\ \in\mathbb{R}^{[0,1]}\ ,
\label{}
\end{equation}
where $\kappa=\mathrm{median}_{x\in\mathcal{D}_{\mathrm{cal}}}\big(\max_k |C_k|\big)$ is a dataset-specific scaling constant computed once on $\mathcal{D}_{\mathrm{cal}}$. Smaller dominant clusters yield larger $g_{k^\star}(x)$ directly proportional to $\lambda(x)$ to inflate uncertainty under sparse semantic support.

\paragraph{(5) Margin to Threshold (Overconfidence Suppression).}
To suppress unwarranted confidence in the low-uncertainty regime, we introduce the margin-to-threshold feature $m(x)$. Using the calibration set $\mathcal D_{\mathrm{cal}}$, we define a label-free reference threshold $\tau_{\mathrm{ref}} = \mathrm{Quantile}_\gamma(\{u(x):x\in\mathcal D_{\mathrm{cal}}\})$ where $\gamma\ge 0.5$ is user specified. This threshold is fixed after calibration and reused during deployment to compute $\lambda(x)$, thereby preserving the validity of subsequent conformal calibration. We define the normalized margin
\begin{equation}
m(x) = \max\Bigl\{0,\ 1-\frac{u(x)}{\tau_{\mathrm{ref}}}\Bigr\}\ \in\mathbb{R}^{[0,1]}\ ,
\label{}
\end{equation}
which is large when $u(x)\ll\tau_{\mathrm{ref}}$, indicating extreme, and potentially miscalibrated confidence. We incorporate $m(x)$ into $\lambda(x)$ so that overconfident prompts receive additional inflation, nudging borderline cases toward abstention, while the $\max$ operator ensures the penalty vanishes whenever $u(x)\ge\tau_{\mathrm{ref}}$.

All features are computed identically during calibration and inference to ensure representational consistency. The constants $\kappa$ and $\tau_{\mathrm{ref}}$ are derived once from the unlabeled calibration set $\mathcal D_{\mathrm{cal}}$ and frozen thereafter, preserving conformal validity. 
To design our monotone inflation function $\lambda$, we introduce a non-negative weight vector $\mathbf{w}=\langle w_u,w_{\tilde a},w_d,w_g,w_m\rangle\in{[0,1]}^5$ that encodes the relative contribution of each structural semantic feature of the feature set $\mathcal{F}=\{u,\tilde a,d_{k^\star},g_{k^\star},m\}$ where $\sum_{l\in\mathcal F} w_l=1$. These assigned weights are a design choice and not learned parameters, accounting for prioritization over cluster brittleness features. We aggregate the feature set $\mathcal{F}$ by weighted average, into the normalized composite brittleness metric $B(x)={\sum_{l\in\mathcal{F}} w_l\,l(x)}\ \in\mathbb{R}^{[0,1]}$ 
where larger $B(x)$ indicates a less reliable prompt due to cluster brittleness. We then define the inflation mapping 
\begin{equation}
\lambda(x)=\frac{2}{\,2-B(x)\,}\ \in\mathbb{R}^{[1,2]}\ ,
\label{eq_lambda_nonlin}
\end{equation}
which satisfies $\lambda_{\mathrm{min}}(x)=1$ for benign prompts ($B(x)=0$) and approaches the bounded maximum $\lambda_{\mathrm{max}}(x)=2$ under maximal structural brittleness, while remaining strictly increasing and convex in $B(x)$. This yields a smooth convex map from \([0,1]\) to \([1,2]\) ensuring boundedness, monotonicity, and order preservation while amplifying high-brittleness cases. Since $\partial\lambda/\partial l>0$ for all $l\in\mathcal F$, any increase in cluster brittleness inflates $\hat u(x)$ via~\eqref{eq_inflated_uncer}. As the entire inflation procedure is label-free, conformal calibration and inference remain valid in the transformed $\hat u$-space, preserving finite-sample guarantees.


\begin{table*}[t]
    \centering
    \resizebox{\textwidth}{!}{
    \begin{tabular}{l|ccccc|ccccc|ccccc|ccccc|ccccc}
        \toprule
        \multirow{2}{*}{\textbf{Method}} & \multicolumn{5}{c|}{\textbf{TriviaQA}} & \multicolumn{5}{c|}{\textbf{CoQA}} & \multicolumn{5}{c|}{\textbf{NQ}} & \multicolumn{5}{c|}{\textbf{TruthfulQA}} & \multicolumn{5}{c}{\textbf{MMLU}} \\
         & \rotatebox{90}{AUROC} & \rotatebox{90}{FPR@95} & \rotatebox{90}{FPR@90} & \rotatebox{90}{AUPR} & \rotatebox{90}{AUARC} & \rotatebox{90}{AUROC} & \rotatebox{90}{FPR@95} & \rotatebox{90}{FPR@90} & \rotatebox{90}{AUPR} & \rotatebox{90}{AUARC} & \rotatebox{90}{AUROC} & \rotatebox{90}{FPR@95} & \rotatebox{90}{FPR@90} & \rotatebox{90}{AUPR} & \rotatebox{90}{AUARC} & \rotatebox{90}{AUROC} & \rotatebox{90}{FPR@95} & \rotatebox{90}{FPR@90} & \rotatebox{90}{AUPR} & \rotatebox{90}{AUARC} & \rotatebox{90}{AUROC} & \rotatebox{90}{FPR@95} & \rotatebox{90}{FPR@90} & \rotatebox{90}{AUPR} & \rotatebox{90}{AUARC} \\
        \midrule
        TE & 0.65 & 0.78 & 0.72 & 0.63 & 0.61 & 0.60 & 0.85 & 0.79 & 0.54 & 0.58 & 0.62 & 0.82 & 0.75 & 0.57 & 0.60 & 0.55 & 0.88 & 0.81 & 0.50 & 0.53 & 0.52 & 0.90 & 0.84 & 0.50 & 0.52 \\
        P(True) & 0.70 & 0.72 & 0.65 & 0.68 & 0.65 & 0.66 & 0.75 & 0.68 & 0.63 & 0.62 & 0.67 & 0.74 & 0.66 & 0.64 & 0.64 & 0.60 & 0.79 & 0.71 & 0.55 & 0.59 & 0.58 & 0.85 & 0.78 & 0.54 & 0.55 \\
        EigV & 0.71 & 0.65 & 0.58 & 0.69 & 0.69 & 0.73 & 0.68 & 0.61 & 0.67 & 0.66 & 0.74 & 0.66 & 0.59 & 0.70 & 0.69 & 0.70 & 0.72 & 0.64 & 0.66 & 0.64 & 0.68 & 0.75 & 0.68 & 0.64 & 0.62 \\
        SU & 0.73 & 0.55 & 0.49 & 0.73 & 0.72 & 0.74 & 0.60 & 0.53 & 0.71 & 0.70 & 0.75 & 0.58 & 0.50 & 0.72 & 0.73 & 0.71 & 0.65 & 0.57 & 0.68 & 0.68 & 0.69 & 0.69 & 0.61 & 0.65 & 0.65 \\
        DDCRP-CP & 0.77 & 0.52 & 0.46 & 0.76 & 0.74 & 0.77 & 0.55 & 0.48 & 0.74 & 0.72 & 0.78 & 0.54 & 0.46 & 0.75 & 0.75 & 0.74 & 0.61 & 0.53 & 0.71 & 0.70 & 0.71 & 0.66 & 0.58 & 0.68 & 0.67 \\
        CAP & 0.80 & 0.48 & 0.41 & 0.79 & 0.76 & 0.79 & 0.52 & 0.44 & 0.77 & 0.75 & 0.80 & 0.50 & 0.42 & 0.78 & 0.77 & 0.76 & 0.58 & 0.50 & 0.73 & 0.72 & 0.73 & 0.62 & 0.54 & 0.70 & 0.69 \\
        \midrule
       \textbf{ACSE (Ours)} & \textbf{0.88} & \textbf{0.31} & \textbf{0.28} & \textbf{0.86} & \textbf{0.85} & \textbf{0.87} & \textbf{0.37} & \textbf{0.32} & \textbf{0.84} & \textbf{0.83} & \textbf{0.84} & \textbf{0.36} & \textbf{0.29} & \textbf{0.81} & \textbf{0.87} & \textbf{0.82} & \textbf{0.43} & \textbf{0.36} & \textbf{0.77} & \textbf{0.76} & \textbf{0.80} & \textbf{0.46} & \textbf{0.39} & \textbf{0.73} & \textbf{0.72} \\
        \bottomrule
    \end{tabular}
    }
    \caption{Hallucination detection performance across diverse benchmarks using Mistral-7b. Metrics evaluate discrimination (AUROC $\uparrow$, AUPR $\uparrow$), safety (FPR@95 $\downarrow$, FPR@90 $\downarrow$), and selective generation (AUARC $\uparrow$).}
        \label{tab:discrimination_dataset}
\end{table*}

\begin{table*}[t]
\centering
\resizebox{\textwidth}{!}{
\begin{tabular}{l|ccccc|ccccc|ccccc|ccccc}
\toprule
\multirow{2}{*}{\textbf{Method}} & \multicolumn{5}{c|}{\textbf{Mistral-7B}} & \multicolumn{5}{c|}{\textbf{LLaMA-2-7B}} & \multicolumn{5}{c|}{\textbf{Falcon-7B}} & \multicolumn{5}{c}{\textbf{Qwen-7B}} \\
 & \rotatebox{90}{AUROC} & \rotatebox{90}{FPR@95} & \rotatebox{90}{FPR@90} & \rotatebox{90}{AUPR} & \rotatebox{90}{AUARC} & \rotatebox{90}{AUROC} & \rotatebox{90}{FPR@95} & \rotatebox{90}{FPR@90} & \rotatebox{90}{AUPR} & \rotatebox{90}{AUARC} & \rotatebox{90}{AUROC} & \rotatebox{90}{FPR@95} & \rotatebox{90}{FPR@90} & \rotatebox{90}{AUPR} & \rotatebox{90}{AUARC} & \rotatebox{90}{AUROC} & \rotatebox{90}{FPR@95} & \rotatebox{90}{FPR@90} & \rotatebox{90}{AUPR} & \rotatebox{90}{AUARC} \\
\midrule
TE & 0.65 & 0.78 & 0.72 & 0.63 & 0.61 & 0.59 & 0.85 & 0.78 & 0.56 & 0.55 & 0.55 & 0.89 & 0.82 & 0.53 & 0.52 & 0.64 & 0.80 & 0.72 & 0.61 & 0.62 \\
P(True) & 0.70 & 0.72 & 0.65 & 0.68 & 0.65 & 0.65 & 0.77 & 0.69 & 0.61 & 0.61 & 0.61 & 0.81 & 0.74 & 0.58 & 0.59 & 0.70 & 0.72 & 0.65 & 0.67 & 0.68 \\
EigV & 0.71 & 0.65 & 0.58 & 0.69 & 0.69 & 0.69 & 0.66 & 0.59 & 0.66 & 0.67 & 0.67 & 0.72 & 0.64 & 0.62 & 0.64 & 0.74 & 0.61 & 0.54 & 0.71 & 0.72 \\
SU & 0.73 & 0.55 & 0.49 & 0.73 & 0.72 & 0.75 & 0.56 & 0.49 & 0.72 & 0.72 & 0.72 & 0.60 & 0.53 & 0.70 & 0.68 & 0.79 & 0.50 & 0.43 & 0.77 & 0.78 \\
DDCRP-CP & 0.77 & 0.52 & 0.46 & 0.76 & 0.74 & 0.77 & 0.52 & 0.45 & 0.74 & 0.74 & 0.74 & 0.57 & 0.49 & 0.71 & 0.72 & 0.82 & 0.47 & 0.40 & 0.79 & 0.81 \\
CAP & 0.80 & 0.48 & 0.41 & 0.79 & 0.76 & 0.79 & 0.48 & 0.41 & 0.76 & 0.76 & 0.76 & 0.54 & 0.46 & 0.73 & 0.74 & 0.84 & 0.43 & 0.36 & 0.81 & 0.83 \\
\midrule
\textbf{ACSE (Ours)} & \textbf{0.89} & \textbf{0.28} & \textbf{0.24} & \textbf{0.88} & \textbf{0.87} & \textbf{0.86} & \textbf{0.35} & \textbf{0.29} & \textbf{0.83} & \textbf{0.84} & \textbf{0.84} & \textbf{0.38} & \textbf{0.32} & \textbf{0.81} & \textbf{0.82} & \textbf{0.91} & \textbf{0.25} & \textbf{0.19} & \textbf{0.89} & \textbf{0.90} \\
\bottomrule
\end{tabular} }
\caption{Hallucination detection performance on TriviaQA across diverse architectures. Metrics evaluate discrimination (AUROC $\uparrow$, AUPR $\uparrow$), safety (FPR@95 $\downarrow$, FPR@90 $\downarrow$), and selective generation (AUARC $\uparrow$).}
\label{tab:discrimination_model}
\end{table*}


\subsection{Prompt Acceptance and Response Set}
\label{prompt-response-cp}

\paragraph{Prompt-Level Decision Rule.}
For each prompt $x$, let define the returned response $\tilde y=y_{i^*}$ as the most representative member of dominant cluster $k^\star$, i.e., cluster with largest total assignment mass. 
We use the inflated uncertainty $\hat u(x)\in\mathbb{R}^{[0,1]}$ as a prompt-level score so that small $\hat u(x)$ indicates robust semantic agreement after inflation, while large $\hat u(x)$ indicates cluster brittleness. Since CP requires one label per prompt, we define a prompt-level error label by evaluating the returned response $E(x)=e(x,\tilde y)\in\{0,1\}$ where $E(x)=0$ indicates that the returned response is correct.
Using $\mathcal{I}_0=\{\hat u(x): x\in\mathcal{D}_{\mathrm{cal}} \land E(x)=0\}$ as calibration prompts whose returned response is correct, we calibrate a prompt acceptance threshold $\hat\tau = \mathrm{Quantile}_{1-\alpha}\big(\mathcal{I}_0\big)$
which is $(1-\alpha)$-quantile of scores $\hat u(.)$ in the multiset $\mathcal I_0$ (if $\mathcal I_0=\emptyset$, we abstain on all prompts).
At inference, to answer a prompt $x$, we
\begin{equation}
\text{accept }\ x\ \Longleftrightarrow\ \hat u(x)\le \hat\tau\ ;\quad \text{abstain otherwise.}
\label{eq:twolevel_accept_prompt}
\end{equation}
This calibration is appropriate since it aligns the prompt-level score $\hat u(x)$ with the prompt-level LLM outcome, i.e., correctness of the actually returned response $\tilde y$.

\begin{theorem}[Conformal coverage for prompt acceptance]
\label{thm:twolevel_prompt}
Assume $\tilde y$ and $\hat u(\cdot)$ are computed by the same procedure for calibration and test prompts under exchangeability of $\mathcal{D}_{\mathrm{cal}}$ and test prompt $x_{\mathrm{new}}$. With the $(1-\alpha)$-quantile $\hat\tau$ defined on $\hat u$-space of prompts with correct returned responses, we have 
\begin{align}
\begin{split}
&\mathbb{P}\!\left(\hat u(x_{\mathrm{new}})\le \hat\tau\ \middle|\ E(x_{\mathrm{new}})=0\right) \ge 1-\alpha\ .
\end{split}
\label{eq:twolevel_prompt_guar}
\end{align}
\end{theorem}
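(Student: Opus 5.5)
The argument is the standard split-conformal quantile bound, applied to the subpopulation of prompts whose returned response is correct. First we reduce to a single exchangeable sequence of scalar scores. All ingredients of $\hat u(\cdot)$ and $\tilde y$ are fixed functions of the prompt and its sampled responses: the embeddings, the HAC clusters, the soft assignments, the features $\tilde a, d_{k^\star}, g_{k^\star}, m$, and the frozen constants $\kappa$ and $\tau_{\mathrm{ref}}$. The same holds for the label $E(x)=e(x,\tilde y)$. Hence the pairs $(\hat u(x_j),E(x_j))$ for $x_1,\dots,x_M,x_{\mathrm{new}}$ are exchangeable, because they are the same measurable map applied to exchangeable inputs.

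One caveat is that $\kappa$ and $\tau_{\mathrm{ref}}$ are computed from $\mathcal D_{\mathrm{cal}}$ itself. To keep this step clean, I would either treat them as computed on a disjoint split, or note that they are symmetric functions of the calibration multiset. In the latter case they should be recomputed on the augmented set including $x_{\mathrm{new}}$, or treated as fixed, so that exchangeability of the scores is preserved. I will state the theorem under the first convention.

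Next we condition on correctness. Let $N=|\mathcal I_0|$. Conditional on the event $E(x_{\mathrm{new}})=0$ and on $N=n_0$, I claim the $n_0+1$ scores of the correct prompts, namely $\mathcal I_0\cup\{\hat u(x_{\mathrm{new}})\}$, are exchangeable. The justification is that a permutation of the full sequence that maps correct indices to correct indices leaves the joint law invariant. Conditioning on the set of correct indices therefore gives a uniformly random ordering within that set. This conditional exchangeability is the step I expect to be the main obstacle, and I would write it out carefully by conditioning on the random index set $\{j:E(x_j)=0\}$.

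Once this is in place, the classical rank argument applies. With $\hat\tau$ taken as the $\lceil(1-\alpha)(n_0+1)\rceil$-th smallest element of $\mathcal I_0$, we get $\mathbb P(\hat u(x_{\mathrm{new}})\le\hat\tau\mid E=0,N=n_0)\ge\lceil(1-\alpha)(n_0+1)\rceil/(n_0+1)\ge 1-\alpha$. Ties are handled by the usual "at most" counting, or by random tie-breaking. This requires interpreting $\mathrm{Quantile}_{1-\alpha}$ with the finite-sample correction. With the plain empirical quantile, the bound degrades to $1-\alpha-1/(n_0+1)$, so I would state the corrected-quantile convention explicitly.

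The empty and small-$n_0$ cases need separate treatment. If $\lceil(1-\alpha)(n_0+1)\rceil>n_0$, set $\hat\tau=+\infty$, or equivalently accept everything, so the bound holds trivially. Note that the paper's convention of abstaining on all prompts when $\mathcal I_0=\emptyset$ would in fact violate the bound on that event. I would flag that it must be replaced by $\hat\tau=1$, which is valid since $\hat u\in[0,1]$, or else the guarantee must be stated conditional on $N\ge\lceil 1/\alpha\rceil-1$. Finally, averaging over $N$ under $E(x_{\mathrm{new}})=0$ yields the stated guarantee.
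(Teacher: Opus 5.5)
Your proposal follows the same route as the paper's proof: conditional on $E(x_{\mathrm{new}})=0$, the score $\hat u(x_{\mathrm{new}})$ has uniform rank among the $M_0+1$ scores of correct-label prompts, and the conformal quantile step then gives the bound, so the argument is correct. The points you make explicit are the steps the paper either asserts without proof or covers by appeal to ``standard conformal prediction theory'': justifying conditional exchangeability by conditioning on the random set of correct indices, using the $\lceil(1-\alpha)(M_0+1)\rceil$-th order statistic rather than a plain empirical quantile, freezing $\kappa$ and $\tau_{\mathrm{ref}}$ on data not reused for $\mathcal I_0$, and setting $\hat\tau=1$ instead of abstaining when $\mathcal I_0=\emptyset$. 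The last three convention changes are what make the finite-sample bound hold exactly.
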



\begin{table*}[t]
\centering
\resizebox{\textwidth}{!}{
\begin{tabular}{l|cccccc|cccccc|cccccc}
\toprule
\multirow{3}{*}{\textbf{Method}} &
\multicolumn{6}{c|}{$\alpha=0.05$ (Strict)} &
\multicolumn{6}{c|}{$\alpha=0.10$ (Standard)} &
\multicolumn{6}{c}{$\alpha=0.20$ (Permissive)} \\
& \multicolumn{3}{c}{Response-level} & \multicolumn{3}{c|}{Prompt-level}
& \multicolumn{3}{c}{Response-level} & \multicolumn{3}{c|}{Prompt-level}
& \multicolumn{3}{c}{Response-level} & \multicolumn{3}{c}{Prompt-level} \\
& R-Cov $\uparrow$ & APS $\downarrow$ & SSCV $\downarrow$ & P-Cov $\uparrow$ & Acc. $\uparrow$ & Risk $\downarrow$
& R-Cov $\uparrow$ & APS $\downarrow$ & SSCV $\downarrow$ & P-Cov $\uparrow$ & Acc. $\uparrow$ & Risk $\downarrow$
& R-Cov $\uparrow$ & APS $\downarrow$ & SSCV $\downarrow$ & P-Cov $\uparrow$ & Acc. $\uparrow$ & Risk $\downarrow$ \\
\cmidrule(lr){1-1}\cmidrule(lr){2-4}\cmidrule(lr){5-7}\cmidrule(lr){8-10}\cmidrule(lr){11-13}\cmidrule(lr){14-16}\cmidrule(lr){17-19}
SU       & 0.823 & \textbf{1.05} & 0.081 & 0.810 & 34.2\% & 0.064 & 0.816 & \textbf{1.02} & 0.089 & 0.820 & 52.1\% & 0.118 & 0.808 & \textbf{1.01} & 0.115 & 0.852 & 68.4\% & 0.225 \\
DDCRP-CP & 0.941 & 1.95 & 0.046 & 0.942 & 42.5\% & \textbf{0.038} & 0.923 & 1.45 & 0.051 & 0.884 & 62.8\% & \textbf{0.076} & 0.810 & 1.22 & 0.074 & 0.791 & 79.1\% & \textbf{0.152} \\
CAP      & 0.953 & 1.72 & 0.038 & 0.951 & 46.1\% & 0.046 & 0.902 & 1.32 & 0.042 & 0.917 & 65.4\% & 0.093 & 0.827 & 1.15 & 0.062 & 0.814 & 82.5\% & 0.188 \\
\midrule
\textbf{ACSE (Ours)} & \textbf{0.985} & 1.32 & \textbf{0.024} & \textbf{0.971} & \textbf{55.4\%} & 0.044 & \textbf{0.939} & 1.07 & \textbf{0.030} & \textbf{0.923} & \textbf{75.8\%} & 0.088 & \textbf{0.842} & 1.08 & \textbf{0.045} & \textbf{0.835} & \textbf{89.4\%} & 0.178 \\
\bottomrule
\end{tabular}
}
\caption{Conformal analysis across varying $\alpha$: Response Cov., APS, SSCV, Prompt Cov., Acceptance Rate (Acc.), and Selective Risk.}
\label{tab:coverage_risk_guarantee}
\end{table*}

\paragraph{Response-Level Prediction Sets.}
To construct CP-consistent prediction sets over sampled responses, we define a response-level non-conformity score that combines prompt uncertainty with response atypicality. Using the clustering outputs, we first define a response conformity measure
\begin{equation}
\phi_i\;=\; s_{i\hat{k}}\cdot\mathcal{P}(C_{\hat k})\ \in\mathbb{R}^{[0,1]} \quad \text{s.t.} \quad \hat k=\arg\max_{k} s_{ik}\ ,
\label{eq:twolevel_q}
\end{equation}
which is large when response $y_i$ is both strongly assigned to its best-matching semantic cluster and that cluster is well supported across samples.
We then define the response-level non-conformity score (smaller is better) as,
\begin{equation}
S(x,y_i)\;=\;\frac{1}{2}\Big(\hat u(x) + \big(1-\phi_i\big)\Big)\ \in\ \mathbb{R}^{[0,1]}\ ,
\label{eq:twolevel_S}
\end{equation}
which increases with prompt-level uncertainty $\hat u(x)$ and with response atypicality. Thus, a response attains a low score only when it arises from a low-uncertainty prompt and is semantically representative within a well-supported cluster.
For calibration prompt-response pairs, we compute $S(x_j,y_{ji})$ in~\eqref{eq:twolevel_S} and form the multiset $\mathcal{S}_0=\{S(x_j,y_{ji}):\ e_{ji}=0\}$ of scores corresponding to correct responses. The $(1-\alpha)$-quantile threshold is then defined as $\hat q=\mathrm{Quantile}_{1-\alpha}(\mathcal S_0)$. For a test prompt $x$ with sampled responses $\mathcal Y(x)=\{y_i\}_{i=1}^n$, the response-level prediction set is
\begin{equation}
\mathcal C_\alpha(x)\;=\;\big\{\,y_i\in\mathcal{Y}(x):\ S(x,y_i)\le \hat q\,\big\}\ ,
\label{eq:twolevel_C}
\end{equation}
where the size of $\mathcal C_\alpha(x)$ provides a direct measure of response-level uncertainty. If a single output is required, we return the most representative response $\tilde y\in\mathcal C_\alpha(x)$. 

\begin{theorem}[Conformal coverage for prediction response sets]
\label{thm:twolevel_response}
Assume $S(x,y)$ is computed identically on calibration and test triples $(x,y,e)$ under exchangeability with the same response sampling procedure. Considering $\mathcal C_\alpha$ defined in~\eqref{eq:twolevel_C}, for a randomly drawn test triple $(x_{\mathrm{new}},y_{\mathrm{new}},e_{\mathrm{new}})$,
\begin{equation}
\mathbb{P}\!\left(y_{\mathrm{new}}\in\mathcal C_\alpha(x_{\mathrm{new}})\ \middle|\ e_{\mathrm{new}}=0\right)\ \ge\ 1-\alpha\ .
\label{eq:twolevel_response_guar}
\end{equation}
\end{theorem}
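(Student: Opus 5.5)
The plan is to reduce Theorem~\ref{thm:twolevel_response} to the standard split-conformal quantile lemma, applied to the subpopulation of correct calibration triples. The one real subtlety is that responses from the same calibration prompt share $\hat u(x_j)$ and are not exchangeable with one another in general, so the unit of exchangeability has to be fixed carefully. I will read the hypothesis ``exchangeability of triples'' literally: the calibration triples $(x_j,y_{ji},e_{ji})$ together with the test triple $(x_{\mathrm{new}},y_{\mathrm{new}},e_{\mathrm{new}})$ form an exchangeable sequence. In practice this corresponds, for instance, to drawing one response per calibration prompt, or to treating each triple as one draw from the joint law of (prompt, sampled response, label). Under this reading, the fact that $S(x,y_i)$ depends on the whole response set $\mathcal Y(x)$ (through $\hat u(x)$, $s_{i\hat k}$ and $\mathcal P(C_{\hat k})$) causes no difficulty. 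It is a fixed measurable function applied identically to every triple, with $\kappa$ and $\tau_{\mathrm{ref}}$ frozen beforehand, so the scores $S_j$ inherit exchangeability.

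Key steps, in order. First, I will define the scores $Z_1,\dots,Z_N$ for all calibration triples and $Z_{N+1}=S(x_{\mathrm{new}},y_{\mathrm{new}})$, and note that the sequence of pairs $(Z_t,e_t)$ is exchangeable. Second, I will condition on $e_{\mathrm{new}}=0$ and on the set of indices with $e_t=0$. The standard argument is that exchangeability is preserved under conditioning on the multiset of labels, or on the event that a given subset of indices has label $0$. Hence, given $e_{\mathrm{new}}=0$, the scores in $\mathcal S_0\cup\{Z_{N+1}\}$ are exchangeable, with $|\mathcal S_0|=N_0$. Third, I will invoke the quantile lemma: for exchangeable $Z'_1,\dots,Z'_{N_0+1}$, the rank of $Z'_{N_0+1}$ is uniform (up to ties, which only help). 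This gives $\mathbb P(Z'_{N_0+1}\le Z'_{(\lceil (1-\alpha)(N_0+1)\rceil)})\ge 1-\alpha$. Fourth, I will identify $\hat q=\mathrm{Quantile}_{1-\alpha}(\mathcal S_0)$ with the finite-sample corrected empirical quantile $\lceil (1-\alpha)(N_0+1)\rceil/N_0$, interpreted as $+\infty$ if this exceeds $1$. The event $y_{\mathrm{new}}\in\mathcal C_\alpha(x_{\mathrm{new}})$ is exactly $Z_{N+1}\le\hat q$ by \eqref{eq:twolevel_C}. Finally, I will integrate over $N_0$ to remove the conditioning on the label set, which yields \eqref{eq:twolevel_response_guar}.

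The main obstacle is the conditioning in the second step together with the within-prompt dependence. I would handle conditioning on labels exactly as in the proof of Theorem~\ref{thm:twolevel_prompt}. By exchangeability of the full sequence, any permutation of indices that preserves the label vector preserves the joint law. Hence, conditional on the label vector, the label-$0$ scores are exchangeable, which is the class-conditional (Mondrian) conformal argument. For dependence, if one insists on using all $n$ responses per prompt, I would fall back to a prompt-level exchangeability argument: choose one correct response per calibration prompt uniformly at random, and note that the test response is also a uniformly random correct response of an exchangeable prompt. The quantile lemma then applies at the prompt level. I would remark that the pooled quantile $\hat q$ is valid under the triple-level exchangeability assumed in the statement. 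A further caveat is the empty case: if $\mathcal S_0=\emptyset$, we set $\hat q=+\infty$, or abstain, so that the bound holds trivially.
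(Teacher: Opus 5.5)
Your proposal is correct and follows the paper's proof: conditional on $e_{\mathrm{new}}=0$, the test score is exchangeable with the correct-labelled calibration scores $\mathcal S_0$; the standard conformal quantile bound then gives $S(x_{\mathrm{new}},y_{\mathrm{new}})\le\hat q$ with probability at least $1-\alpha$; and this event is exactly $y_{\mathrm{new}}\in\mathcal C_\alpha(x_{\mathrm{new}})$. You are more careful than the paper about the label-conditional (Mondrian) exchangeability, the $\lceil(1-\alpha)(N_0+1)\rceil$ finite-sample correction that the paper's plain ``$(1-\alpha)$-quantile'' implicitly requires, the $\hat q=+\infty$ convention, and the fact that pooling several responses per prompt is licensed only by the literal triple-level hypothesis; one caveat you share with the paper is that ``$\kappa$ and $\tau_{\mathrm{ref}}$ frozen beforehand'' makes the scores exchangeable only if these constants are fit on data separate from the scores being calibrated (or symmetrically on $\mathcal D_{\mathrm{cal}}\cup\{x_{\mathrm{new}}\}$), whereas the paper computes them from $\mathcal D_{\mathrm{cal}}$ itself, so both arguments tacitly treat $S$ as a fixed, data-independent function.
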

\noindent The proofs of the theorems and computational overhead analysis of our approach are reported in Appendices~\ref{proofs} and~\ref{app:comp}, respectively.

\section{Experimental Evaluations} 
\label{exp}

We evaluate the performance of ACSE using discriminative and conformal analyses, benchmarking its ability to identify hallucinations (incorrect responses) reliably through adaptive uncertainty inflation against both uncalibrated and calibrated baselines. 
Appendix~\ref{exp_setup} describes the experimental setup and evaluation metrics. 

\paragraph{Implementation Details.}
For each prompt, we generate $n=10$ responses via nucleus sampling ($\eta=0.9$) with temperature $0.3$ to balance diversity and coherence and embed them using the sentence encoder all-MiniLM-L6. We perform HAC clustering with average linkage and cosine distance threshold $\epsilon=0.35$. We set $\lambda$ to equally weigh all brittleness features as $\mathbf{w}=\langle\frac15,\frac15,\frac15,\frac15,\frac15\rangle$ to avoid any prioritization, and set the overconfidence penalty threshold $\tau_{\mathrm{ref}}$ to the $75$th percentile of the calibration base-uncertainty distribution. Each dataset uses $2000$ samples split into disjoint calibration set (60\%) and test set (40\%). For Size-Stratified Coverage Violation (SSCV) evaluation, we define prediction set size strata as $[1\text{-}2, 3\text{-}5, 6\text{-}7, 8\text{-}10]$. 
The source codes are available at {\color{blue}\emph{\url{https://github.com/tailabTMU/ACSE}}}.

\paragraph{Datasets.}\label{datasets}
We evaluate on five benchmarks spanning open-domain retrieval, conversational QA, hallucination detection, and multi-domain reasoning: \textit{TriviaQA}~\cite{joshi-etal-2017-triviaqa}, 
\textit{CoQA}~\cite{reddy2019coqa}, \textit{Natural Questions (NQ)}~\cite{kwiatkowski2019natural}, \textit{TruthfulQA}~\cite{lin2022truthfulqa}, and a stratified subset of \textit{MMLU}~\cite{wang2024mmlu}. 

\paragraph{LLM Models.}\label{models}
We use \textit{Mistral-7B-Instruct}~\cite{jiang2024mistral} which combines Grouped-Query Attention (GQA) and Sliding Window Attention (SWA) for strong performance with efficient inference. To assess model-agnosticism, we additionally evaluate \textit{Llama-2-7B-Chat}~\cite{touvron2023llama}, \textit{Falcon-7B-Instruct} (Multi-Query Attention; MQA), and \textit{Qwen-7B-Chat}~\cite{bai2023qwentechnicalreport}.

\paragraph{Baselines.}\label{baseline}
We compare ACSE against six uncertainty methods spanning lexical, semantic, geometric, and calibrated approaches: \textit{Token Entropy (TE)}~\cite{vaswani2017attention}, \textit{P(True)}~\cite{kadavath2022language}, \textit{Semantic Uncertainty (SU)}~\cite{kuhn2023semantic}, and \textit{EigV}~\cite{lin2024generatingconfidenceuncertaintyquantification}. We also include two CP-based calibrated frameworks: \textit{DDCRP-CP}~\cite{kaur2024addressinguncertaintyllmsenhance} and \textit{Conformal Abstention Policy (CAP)}~\cite{tayebati2025learning}; we emphasize comparisons among DDCRP-CP, CAP, and ACSE since they share a CP calibration protocol, enabling standardized evaluation.

\subsection{Results and Interpretations}
\label{results}
We conduct an extensive empirical evaluation of ACSE to assess whether it (1) improves hallucination detection relative to state-of-the-art baselines, (2) strictly enforces user-specified coverage via conformal mechanisms along with the selective risk while maintaining high acceptance rates, and (3) improves uncertainty quantification on hallucinations against baselines. 

\begin{figure}[!t]
    \centering
    \begin{subfigure}[t]{0.45\columnwidth}
        \centering
        \includegraphics[width=\linewidth]{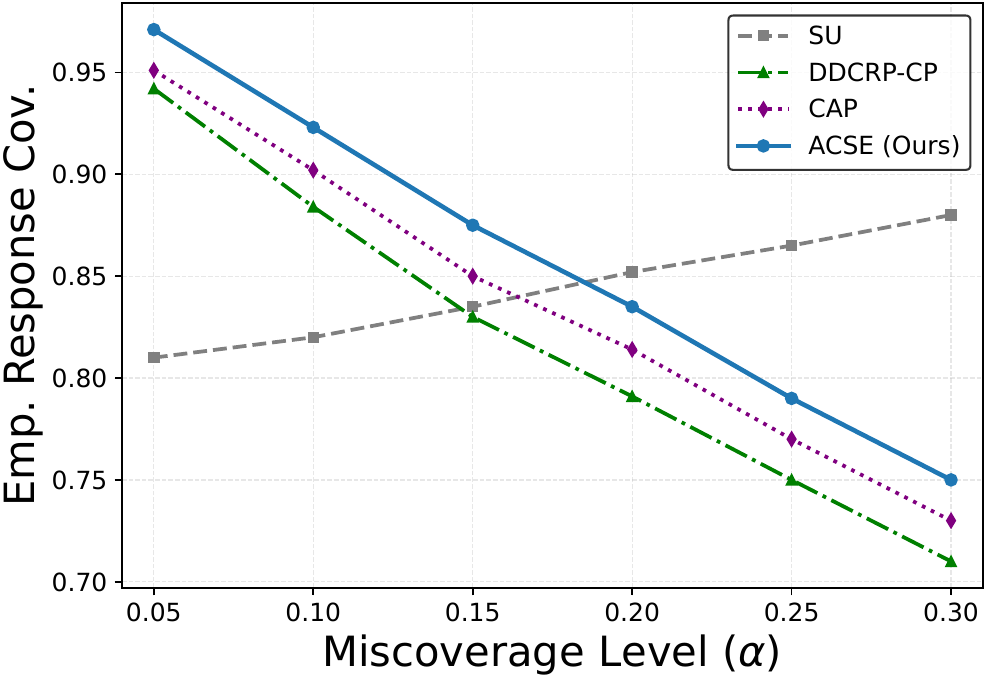}
        \caption{Response Coverage}
        \label{fig:emp_coverage}
    \end{subfigure}\hspace{4mm}
    \begin{subfigure}[t]{0.45\columnwidth}
        \centering
        \includegraphics[width=\linewidth]{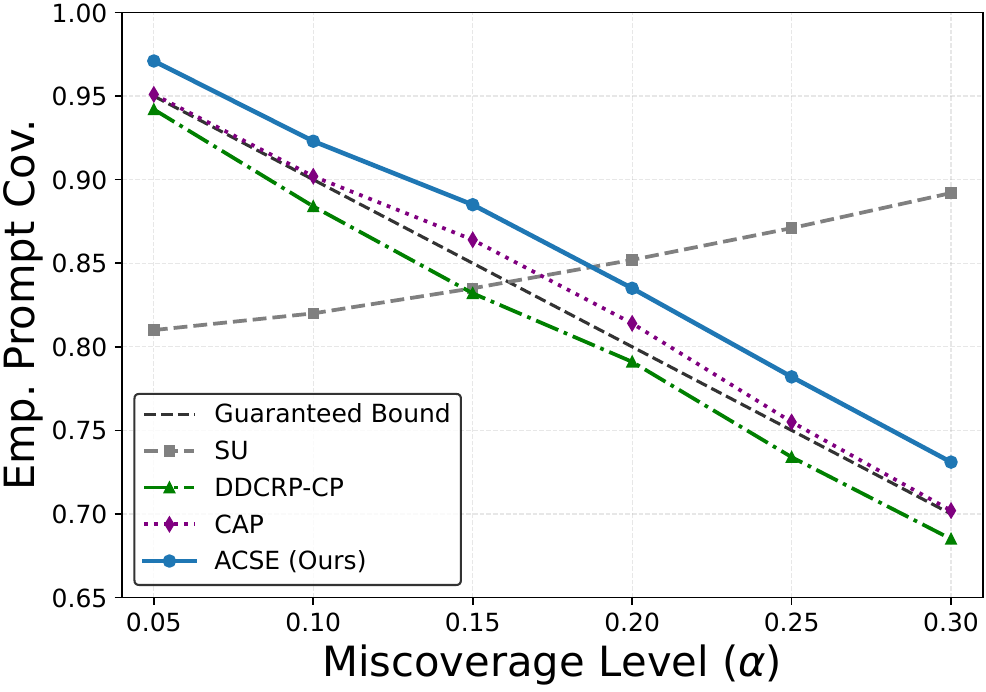}
        \caption{Prompt Coverage}
        \label{fig:prompt_cov}
    \end{subfigure}
    \begin{subfigure}[t]{0.45\columnwidth}
        \centering
        \includegraphics[width=\linewidth]{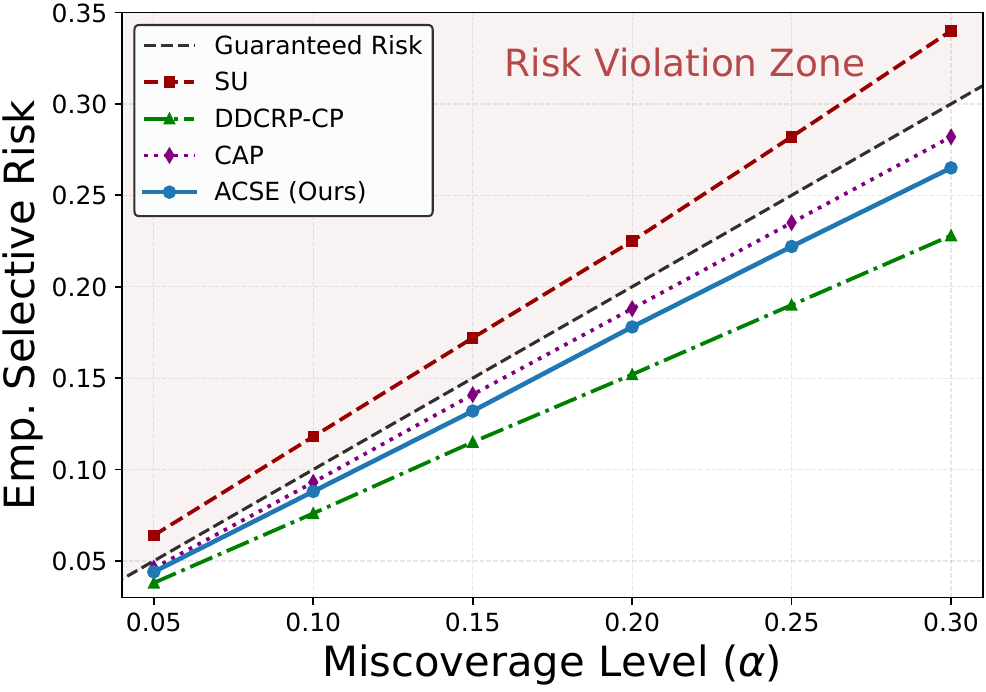}
        \caption{Selective Risk}
        \label{fig:selective_risk_control}
    \end{subfigure}\hspace{4mm}
    \begin{subfigure}[t]{0.45\columnwidth}
        \centering
        \includegraphics[width=\linewidth]{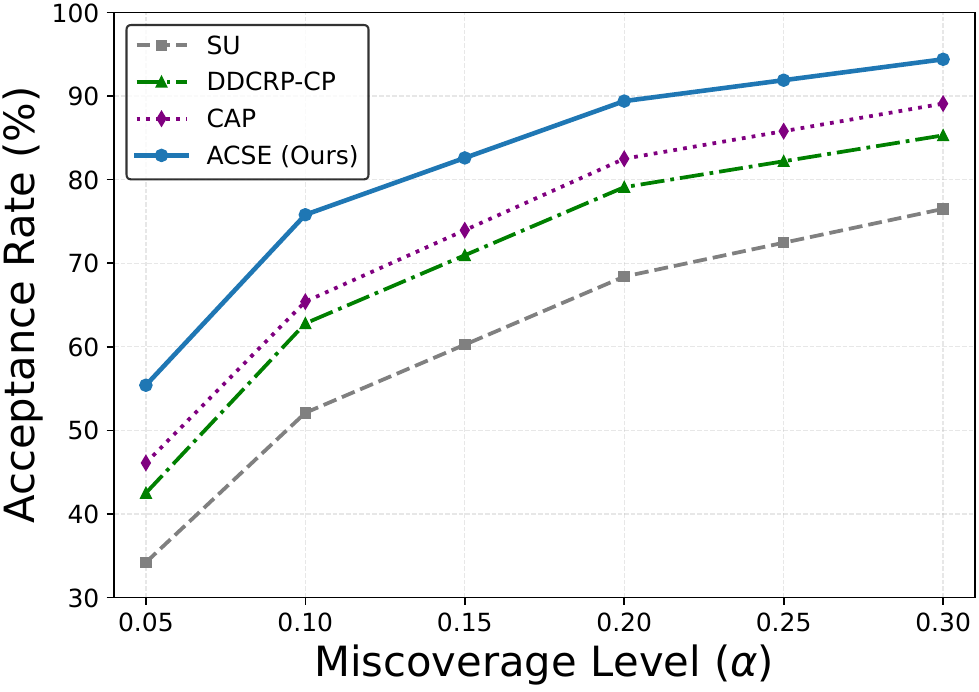}
        \caption{Acceptance Rate}
        \label{fig:coverage_sensitivity_extended}
    \end{subfigure}
    \caption{Sensitivity analysis against varying miscoverage level $\alpha$.}
    \label{fig:sensitivity_alpha}
\end{figure}

\paragraph{Hallucination Detection Performance.}
\label{hallucination}
This evaluation assesses the ACSE's capability to distinguish between factually correct response generations and hallucinations across various datasets and LLMs. The results in Tables~\ref{tab:discrimination_dataset} and~\ref{tab:discrimination_model} demonstrate that ACSE achieves better discriminative performance across all benchmarks, e.g., on TriviaQA, ACSE increases AUROC to $0.88$ from CAP's $0.80$. This improvement is particularly notable on high-efficiency models such as Falcon-7B where ACSE reduces $0.48$ FPR@95 observed by CAP to $0.31$, a $35.4$\% relative decrease in accepted hallucinations. Low standard deviations ($0.01$ to $0.02$) across five independent runs confirm the statistical significance of the results. We report ablation studies and additional results on a text summarization dataset, recent 8B LLMs and baselines, together with base SE comparisons, probabilistic calibration results, and statistical significance tests in Appendix~\ref{additional_res}. 

\paragraph{Conformal Guarantees and Selective Risk.}
\label{risk}
We assess ACSE reliability by varying $\alpha$ using both response-level and prompt-level metrics reported in Table~\ref{tab:coverage_risk_guarantee}, to enforce user-specified guarantees through conformal decision rules against baselines. 
Statistical reliability is reported through Response-level Coverage (R-Cov) which measures the inclusion rate of correct generated responses in the prediction sets, and Prompt-level Coverage (P-Cov) which evaluates the fraction of accepted prompts with correct returned responses. The analyses in Figure~\ref{fig:emp_coverage} show that ACSE guarantees superior R-Cov across all $\alpha$ levels. 
Similarly, Figure~\ref{fig:prompt_cov} demonstrates that ACSE consistently satisfies the theoretical prompt coverage guarantee, whereas baselines frequently exhibit undercoverage violations. ACSE also maintains low Average Prediction Set Size (APS) of $1.07$ compared to $1.32$ APS of CAP at $\alpha=0.10$. 

Additionally, ACSE achieves the lowest SSCV scores across all settings, e.g., $0.030$ vs. $0.042$ for CAP at $\alpha=0.10$, indicating superior calibration stability across varying prediction set sizes. 
ACSE also consistently maximizes the Acceptance Rate (Acc.) across benchmarks, reaching $75.8$\% at $\alpha=0.10$, a substantial improvement over the runner-up CAP at $65.4$\%. 
Furthermore, ACSE strictly guarantees the selective risk as the conditional error among accepted prompts that hallucinate, which is restricted by $\alpha$ as shown in Figure~\ref{fig:selective_risk_control}. While the SU baseline suffers from overconfidence and risk violations, ACSE maintains a selective risk of $0.178$ at low APS. ACSE also avoids the conservativeness of DDCRP-CP, which yields marginally lower risk but produces significantly larger prediction sets ($1.45$ vs. $1.07$), trading precision for safety.

\paragraph{Uncertainty Quantification Analysis.}
This experiment evaluates the discriminative ability of our adjusted uncertainty score ($\hat u$) to separate correct responses from hallucinations. Figure~\ref{fig:scatter_comparison} illustrates this separation across four quadrants against each baseline. The top-left quadrant denotes the ideal abstention region for hallucinations where they exhibit both low baseline confidence and high ACSE uncertainty; conversely, the bottom-right quadrant denotes the ideal acceptance region for the correct responses. ACSE consistently assigns high uncertainty to hallucinations and low uncertainty to correct responses whereas the baselines exhibit varying ranges of confidence while hallucinating.
The top-right quadrant highlights baseline failures compared to ACSE, in which hallucinations are incorrectly assigned high confidence but are correctly flagged by ACSE with high uncertainty. 

\begin{figure}[!t]
    \centering
    \begin{subfigure}[t]{0.36\columnwidth}
        \centering
        \includegraphics[width=\linewidth]{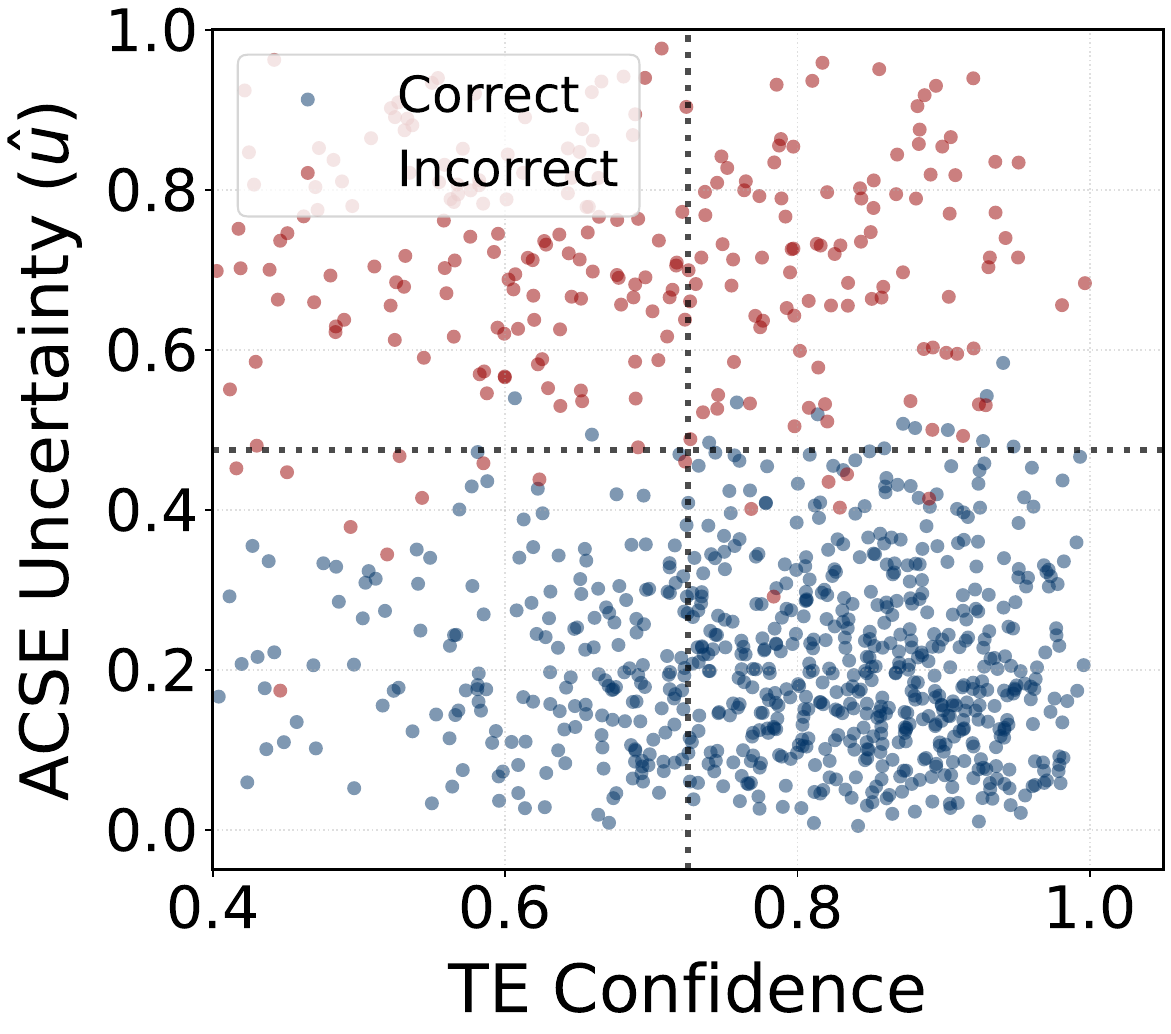}
        \caption{TE}
    \end{subfigure}
    \begin{subfigure}[t]{0.32\columnwidth}
        \centering
        \includegraphics[width=\linewidth]{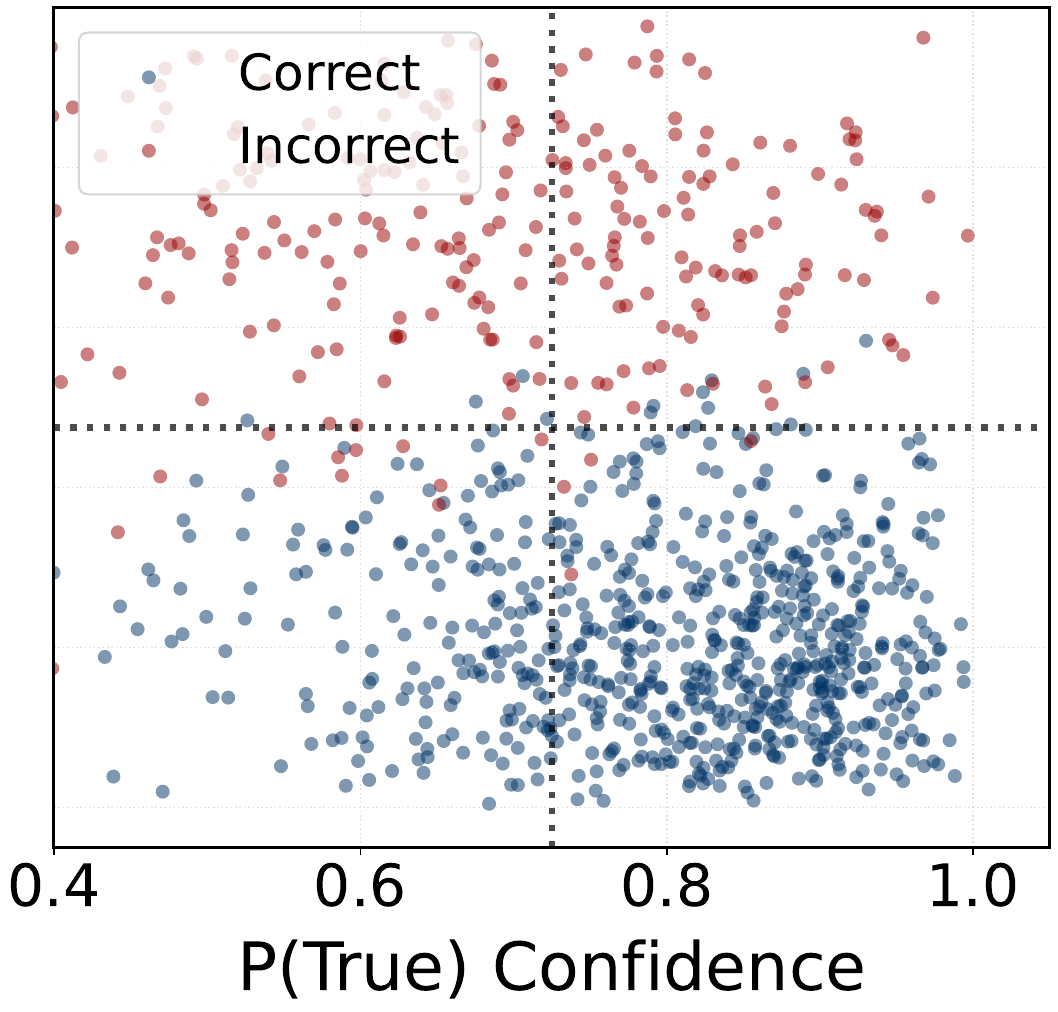}
        \caption{P(True)}
    \end{subfigure}
    \begin{subfigure}[t]{0.32\columnwidth}
        \centering
        \includegraphics[width=\linewidth]{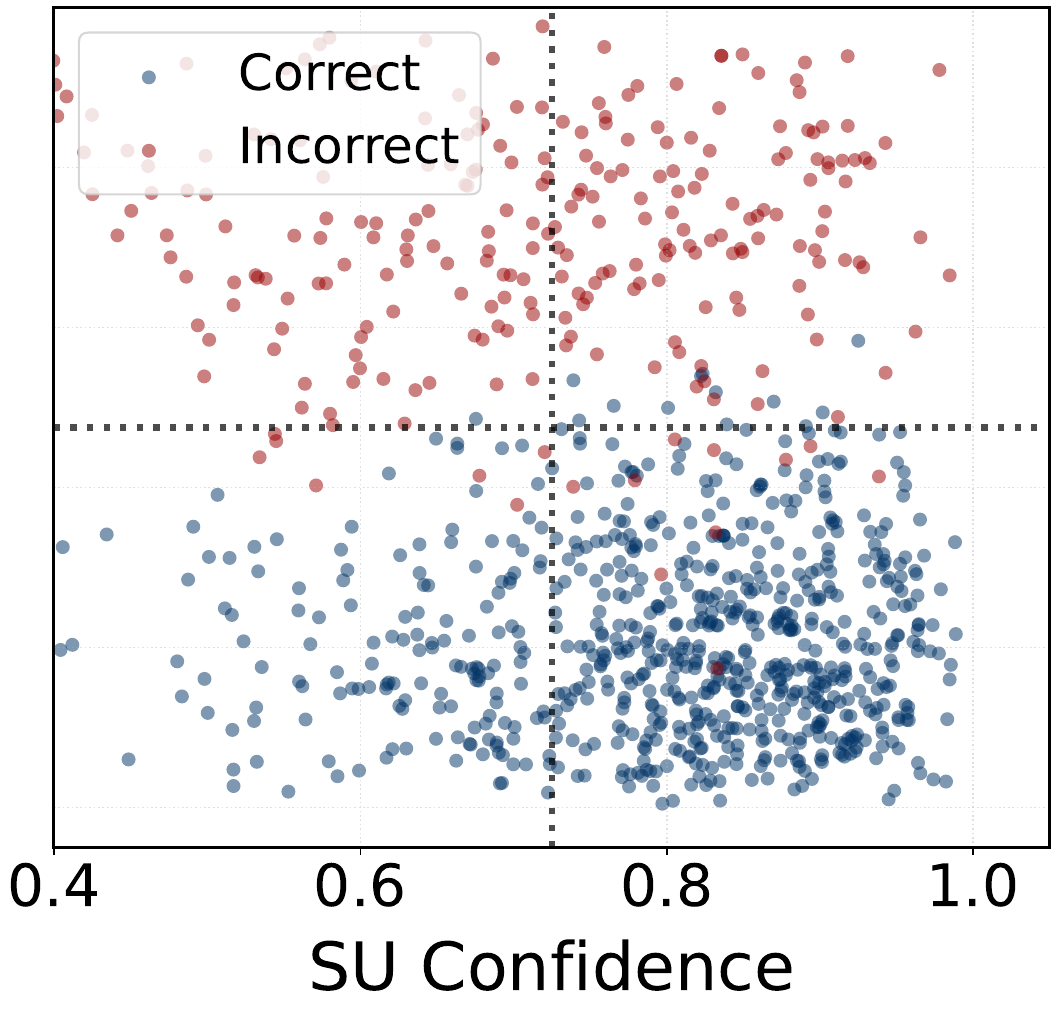}
        \caption{SU}
    \end{subfigure}

    \begin{subfigure}[t]{0.36\columnwidth}
        \centering
        \includegraphics[width=\linewidth]{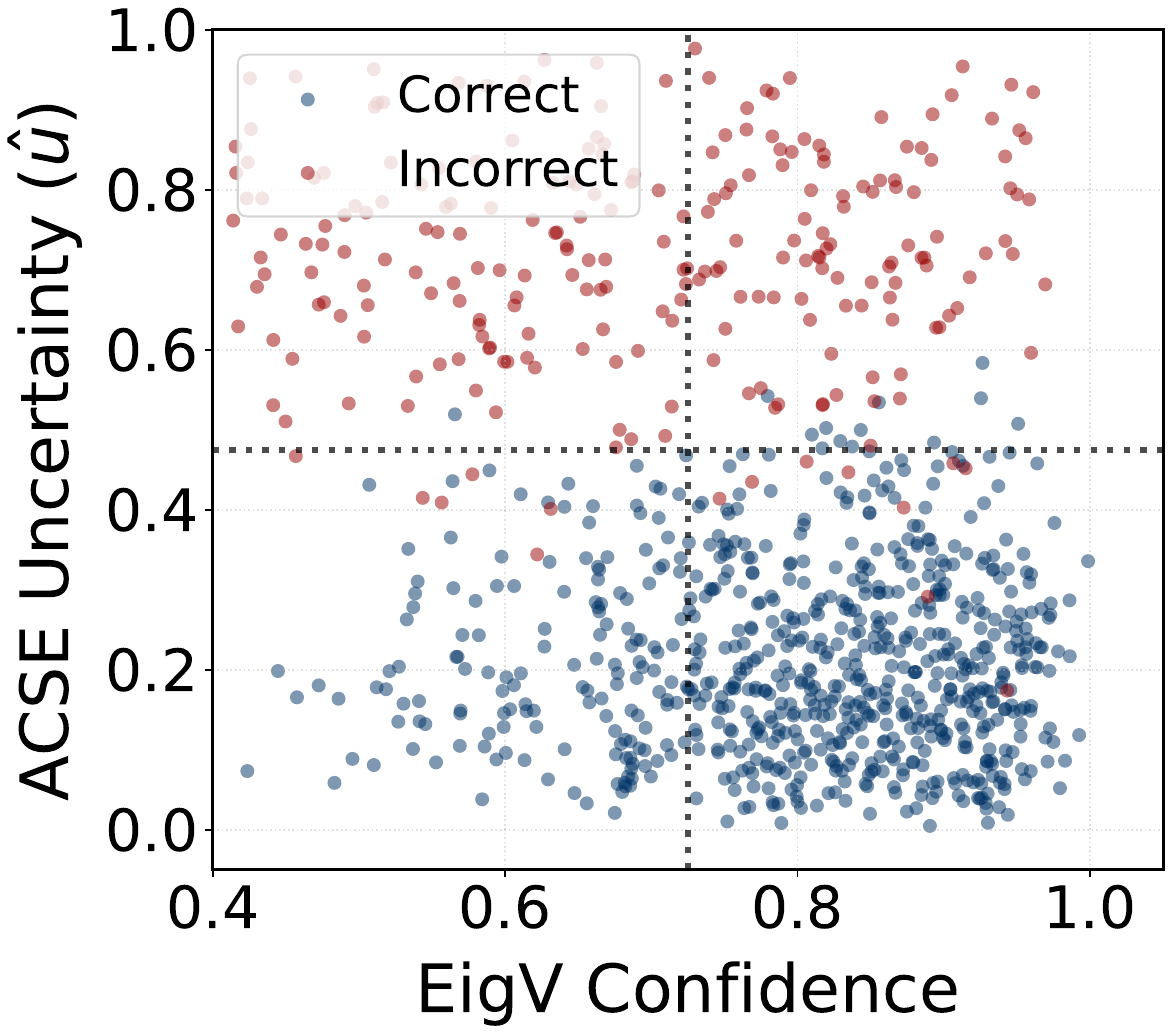}
        \caption{EigV}
    \end{subfigure}
    \begin{subfigure}[t]{0.32\columnwidth}
        \centering
        \includegraphics[width=\linewidth]{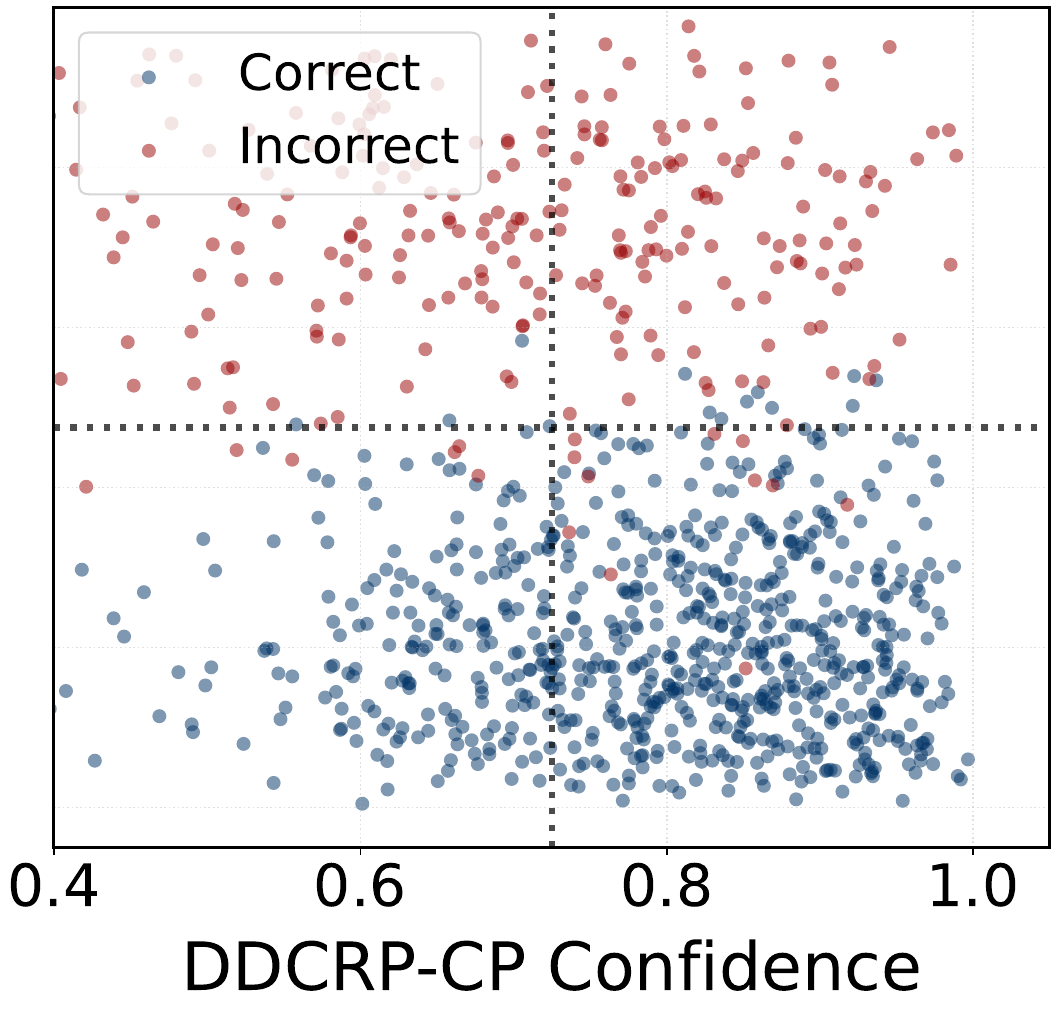}
        \caption{DDCRP-CP}
    \end{subfigure}
    \begin{subfigure}[t]{0.32\columnwidth}
        \centering
        \includegraphics[width=\linewidth]{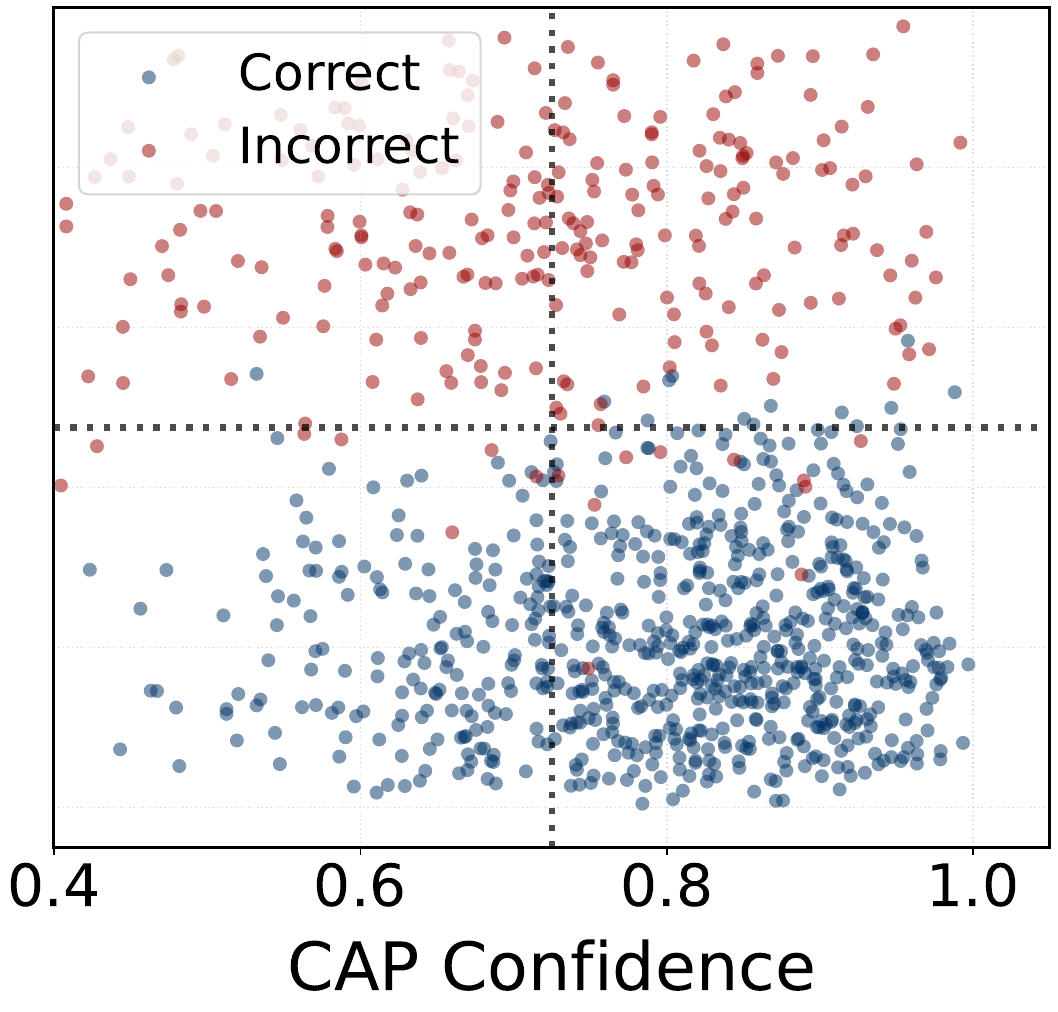}
        \caption{CAP}
    \end{subfigure}

    \caption{Comparing ACSE uncertainty against baseline confidences, distinguishing correct responses (blue) from hallucinations (red).}
    \label{fig:scatter_comparison}
\end{figure}

\section{Conclusion}
\label{conc}
We introduced ACSE to mitigate LLM overconfidence by quantifying uncertainty through semantic dispersion. Rather than treating uncertainty as a one-dimensional signal, ACSE employs an adaptive inflation mechanism to penalize semantic brittleness, ensuring reliable estimates. Evaluations across diverse benchmarks demonstrate that ACSE significantly improves hallucination detection, discriminative performance, and coverage guarantees. 
ACSE offers a robust solution for the reliable deployment of LLMs in safety-critical domains. Future work will explore adaptive sampling strategies to reduce computational overhead and extend the framework to conformal risk control for fine-grained quality calibration.




\section*{Acknowledgments}
This research was undertaken thanks in part to funding from the Canada First Research Excellence Fund at Toronto Metropolitan University and Natural Sciences and Engineering Research Council of Canada (NSERC) Discovery grants (\#348100).



\bibliographystyle{named}
\bibliography{references}

\clearpage
\newpage
\appendix


\section{Proofs}
\label{proofs}

\begin{proof}[Proof of Theorem 1] 
We analyze the probability of acceptance conditional on the event that the returned response is correct, $E(x_{\mathrm{new}})=0$. Let $\mathcal{I}_0 = \{ \hat{u}(x) : x \in \mathcal{D}_{\mathrm{cal}} \land E(x) = 0 \}$ be the multiset of inflated uncertainty scores for the calibration prompts that resulted in correct responses, and let $M_0 = |\mathcal{I}_0|$.
By the assumption of exchangeability between the calibration set $\mathcal{D}_{\mathrm{cal}}$ and the test prompt $x_{\mathrm{new}}$, the score $\hat{u}(x_{\mathrm{new}})$ is exchangeable with the elements of $\mathcal{I}_0$ conditional on $E(x_{\mathrm{new}})=0$. Consequently, the rank of $\hat{u}(x_{\mathrm{new}})$ among the $M_0 + 1$ scores in $\mathcal{I}_0 \cup \{\hat{u}(x_{\mathrm{new}})\}$ is uniformly distributed on $\{1, \dots, M_0+1\}$.
The threshold $\hat{\tau}$ is defined as the empirical $(1-\alpha)$-quantile of $\mathcal{I}_0$. Following standard conformal prediction theory, this quantile is chosen such that it covers at least a fraction $1-\alpha$ of the exchangeable distribution. Therefore,
\begin{equation}
    \mathbb{P}\!\left(\hat{u}(x_{\mathrm{new}}) \le \hat{\tau} \mid E(x_{\mathrm{new}}) = 0\right) \ge 1 - \alpha\ .
\end{equation}
\end{proof}

\begin{proof}[Proof of Theorem 2] 
We analyze the probability of inclusion conditional on the event that the specific response is correct, $e_{\mathrm{new}}=0$. Let $\mathcal{S}_0 = \{ S(x_j, y_{ji}) : e_{ji} = 0 \}$ be the multiset of non-conformity scores for all correct responses in the calibration set.
Due to the exchangeability of the calibration and test data triples $(x,y,e)$, the test score $S(x_{\mathrm{new}}, y_{\mathrm{new}})$ is exchangeable with the elements of $\mathcal{S}_0$ conditional on $e_{\mathrm{new}}=0$. The threshold $\hat{q}$ is defined as the $(1-\alpha)$-quantile of $\mathcal{S}_0$. By the standard conformal prediction guarantee, the probability that a new exchangeable score does not exceed this quantile is at least $1-\alpha$.
Since the prediction set is defined as $\mathcal{C}_\alpha(x_{\mathrm{new}}) = \{ y \in \mathcal{Y}(x_{\mathrm{new}}) : S(x_{\mathrm{new}}, y) \le \hat{q} \}$, the condition $S(x_{\mathrm{new}}, y_{\mathrm{new}}) \le \hat{q}$ is equivalent to $y_{\mathrm{new}} \in \mathcal{C}_\alpha(x_{\mathrm{new}})$. Therefore,
\begin{equation}
    \mathbb{P}\!\left(y_{\mathrm{new}}\in\mathcal C_\alpha(x_{\mathrm{new}})\ \middle|\ e_{\mathrm{new}}=0\right) \ge 1-\alpha\ .
\end{equation}
\end{proof}


\section{Computational Overhead Analysis}
\label{app:comp}

The dominant overhead comes from generating multiple responses per prompt. By contrast, the ACSE-specific semantic module is lightweight and depends only on the small bounded sampling budget \(n\) (in our experiments, \(n=10\)). This additional cost is justified in safety-critical settings, where improved reliability and calibrated abstention are more important than maximizing raw throughput.

For a prompt \(x\), ACSE samples \(n\) responses \(y_1,\dots,y_n\). Let \(C_{\mathrm{gen}}(y_i)\) denote the cost of generating response \(y_i\). We keep this term abstract, since its exact value depends on the base LLM, hardware, batching, and implementation. Let \(d\) be the embedding dimension and \(K\) the number of semantic clusters returned by HAC, where \(1< K\le n\). The per-prompt runtime can be written as
\begin{equation}
\mathcal T(x)=\sum_{i=1}^{n} C_{\mathrm{gen}}(y_i)+\mathcal O\big(n^2 d + n^2\log n + nKd\big).
\label{eq:acse_runtime_simplified}
\end{equation}
The three semantic terms in~\eqref{eq:acse_runtime_simplified} correspond to (1) \(\mathcal O(n^2 d)\) for pairwise cosine geometry over the \(n\) response embeddings in \(\mathbb R^d\), (2) \(\mathcal O(n^2\log n)\) for average-linkage hierarchical agglomerative clustering, and (3) \(\mathcal O(nKd)\) for centroid-based soft assignment and prompt-level semantic aggregation.
Since \(K\le n\), the semantic post-processing depends only on the small response budget \(n\) and the embedding dimension \(d\), not on the parameter count of the base LLM. Therefore, ACSE is not purely linear in \(n\), but its practical bottleneck remains repeated response generation, not the semantic uncertainty computation itself. In particular, the inflation rule and final prompt-level feature computation are negligible once the responses have already been generated.

The same pipeline is used during calibration and deployment, but its impact differs across phases. During calibration, the cost is paid once offline to compute the conformal operating point. During deployment, the per-prompt latency is governed directly by~\eqref{eq:acse_runtime_simplified}. Thus, the main computational concern is online inference rather than offline calibration.

The ACSE-specific memory overhead comes from storing the semantic objects associated with the \(n\) sampled responses of one prompt: the response embeddings in \(\mathbb R^d\), the pairwise similarity or distance matrix used by HAC, and the response-to-cluster soft assignments. Thus, the per-prompt memory overhead is
\begin{equation}
\mathcal M(x)=\mathcal O(nd+n^2+nK)=\mathcal O(nd+n^2),
\label{eq:acse_memory_simplified}
\end{equation}
since \(K\le n\). Importantly, ACSE does not replicate the base LLM in memory; it only adds a small semantic buffer on top of standard model inference memory. For the bounded regime used in this work (\(n=10\)), this overhead remains modest.

We measure the computational overhead of ACSE, DDCRP-CP, and CAP under the same hardware and decoding setup on TriviaQA using Llama-3-8B. In addition to end-to-end latency, we report a decomposition into shared sampling cost and method-specific post-processing cost, allowing us to separate the cost of uncertainty estimation from the cost of generating multiple responses, which is common to all multi-sample methods.
As shown in Table~\ref{tab:overhead_llama}, ACSE has the highest end-to-end latency among the compared methods, which is expected for a multi-sample semantic uncertainty approach. However, the dominant cost for all three conformal methods comes from shared response sampling rather than from method-specific processing. Relative to one-shot decoding, DDCRP-CP, CAP, and ACSE require $5.7\times$, $6.0\times$, and $6.4\times$ total latency, respectively. The additional ACSE-specific post-processing overhead is limited, exceeding DDCRP-CP by only $0.58$\,s and CAP by $0.31$\,s. Peak memory follows the same trend, with ACSE remaining only slightly above the other multi-sample baselines. These results show that although ACSE is more expensive than simpler conformal methods, most of the overhead comes from shared multi-sample generation, while the extra cost of the adaptive semantic inflation step remains modest.

\begin{table}[t]
    \centering
    \resizebox{0.5\textwidth}{!}{
    \begin{tabular}{l|ccccc}
        \toprule
        \textbf{Method} & Total Latency (s) & Relative Cost & Sampling Time (s) & Post-processing (s) & Peak Memory (GB) \\
        \midrule
        One-shot decoding & $0.86$ & $1.0\times$ & $0.86$ & $0.00$ & $9.3$ \\
        DDCRP-CP & $4.89$ & $5.7\times$ & $4.21$ & $0.68$ & $11.6$ \\
        CAP & $5.16$ & $6.0\times$ & $4.21$ & $0.95$ & $12.0$ \\
        ACSE (Ours) & $5.47$ & $6.4\times$ & $4.21$ & $1.26$ & $12.3$ \\
        \bottomrule
    \end{tabular}
    }
	    \caption{Computational overhead (Llama-3-8B; TriviaQA).}
    \label{tab:overhead_llama}
\end{table}


\begin{table*}[t]
    \centering
    \resizebox{\textwidth}{!}{
    \begin{tabular}{l|ccccc|ccccc}
        \toprule
        \multirow{2}{*}{\textbf{Method}} & \multicolumn{5}{c|}{\textbf{TriviaQA}} & \multicolumn{5}{c}{\textbf{SamSum}} \\
         & {AUROC} & {FPR@95} & {FPR@90} & {AUPR} & {AUARC} & {AUROC} & {FPR@95} & {FPR@90} & {AUPR} & {AUARC} \\
        \midrule
        SAR   & $.836\pm.005$ & $.403\pm.011$ & $.349\pm.010$ & $.821\pm.006$ & $.811\pm.006$ & $.786\pm.006$ & $.492\pm.012$ & $.428\pm.011$ & $.767\pm.007$ & $.756\pm.007$ \\
        KLE   & $.851\pm.004$ & $.374\pm.010$ & $.323\pm.009$ & $.834\pm.005$ & $.823\pm.005$ & $.798\pm.005$ & $.468\pm.011$ & $.408\pm.010$ & $.779\pm.006$ & $.768\pm.006$ \\
        SIMBA & $.844\pm.005$ & $.389\pm.010$ & $.336\pm.009$ & $.828\pm.006$ & $.817\pm.006$ & $.792\pm.006$ & $.481\pm.011$ & $.418\pm.010$ & $.774\pm.006$ & $.762\pm.006$ \\
        \midrule
       \textbf{ACSE (Ours)} & $\mathbf{.872\pm.004}$ & $\mathbf{.332\pm.009}$ & $\mathbf{.286\pm.008}$ & $\mathbf{.849\pm.005}$ & $\mathbf{.844\pm.005}$ & $\mathbf{.807\pm.005}$ & $\mathbf{.447\pm.010}$ & $\mathbf{.389\pm.009}$ & $\mathbf{.786\pm.006}$ & $\mathbf{.776\pm.006}$ \\
        \bottomrule
    \end{tabular}
    }
	    \caption{Task-agnostic discriminative performance on Llama-3-8B.}
    \label{tab:recent_baselines_llama3}
\end{table*}

\begin{table*}[t]
    \centering
    \renewcommand{\arraystretch}{1.20}
    \resizebox{\textwidth}{!}{
    \begin{tabular}{l|ccccc|ccccc|ccccc}
        \toprule
        \multirow{2}{*}{\textbf{Method}} 
        & \multicolumn{5}{c|}{\textbf{Llama-3-8B}} 
        & \multicolumn{5}{c|}{\textbf{Qwen-3-8B}} 
        & \multicolumn{5}{c}{\textbf{Granite-3.1-8B}} \\
        & {AUROC} & {FPR@95} & {FPR@90} & {AUPR} & {AUARC}
        & {AUROC} & {FPR@95} & {FPR@90} & {AUPR} & {AUARC}
        & {AUROC} & {FPR@95} & {FPR@90} & {AUPR} & {AUARC} \\
        \midrule
        SAR
        & $.836\pm.005$ & $.403\pm.011$ & $.349\pm.010$ & $.821\pm.006$ & $.811\pm.006$
        & $.833\pm.005$ & $.410\pm.011$ & $.355\pm.010$ & $.818\pm.006$ & $.808\pm.006$
        & $.826\pm.006$ & $.421\pm.012$ & $.365\pm.011$ & $.810\pm.007$ & $.800\pm.007$ \\
        KLE
        & $.851\pm.004$ & $.374\pm.010$ & $.323\pm.009$ & $.834\pm.005$ & $.823\pm.005$
        & $.848\pm.004$ & $.381\pm.010$ & $.329\pm.009$ & $.830\pm.005$ & $.819\pm.005$
        & $.841\pm.005$ & $.392\pm.011$ & $.339\pm.010$ & $.823\pm.006$ & $.812\pm.006$ \\
        SIMBA
        & $.844\pm.005$ & $.389\pm.010$ & $.336\pm.009$ & $.828\pm.006$ & $.817\pm.006$
        & $.840\pm.005$ & $.397\pm.010$ & $.343\pm.009$ & $.824\pm.006$ & $.813\pm.006$
        & $.834\pm.006$ & $.407\pm.011$ & $.351\pm.010$ & $.817\pm.006$ & $.806\pm.006$ \\
        \midrule
        \textbf{ACSE (Ours)}
        & $\mathbf{.872\pm.004}$ & $\mathbf{.332\pm.009}$ & $\mathbf{.286\pm.008}$ & $\mathbf{.849\pm.005}$ & $\mathbf{.844\pm.005}$
        & $\mathbf{.869\pm.004}$ & $\mathbf{.338\pm.009}$ & $\mathbf{.291\pm.008}$ & $\mathbf{.846\pm.005}$ & $\mathbf{.842\pm.005}$
        & $\mathbf{.863\pm.005}$ & $\mathbf{.349\pm.010}$ & $\mathbf{.301\pm.009}$ & $\mathbf{.839\pm.005}$ & $\mathbf{.836\pm.005}$ \\
        \bottomrule
    \end{tabular}
    }
	    \caption{Discriminative performance across additional models/methods on TriviaQA.}
    \label{tab:model_generalization_rebuttal}
\end{table*}

\begin{table}[t]
\centering
\resizebox{\columnwidth}{!}{%
\begin{tabular}{l|cc|cc|cc}
\toprule
\multirow{2}{*}{\textbf{Method}} & \multicolumn{2}{c|}{\textbf{TriviaQA}} & \multicolumn{2}{c|}{\textbf{CoQA}} & \multicolumn{2}{c}{\textbf{NQ}} \\
 & AUROC & FPR@95 & AUROC & FPR@95 & AUROC & FPR@95 \\
\midrule
SE Score & 0.77 & 0.49 & 0.75 & 0.55 & 0.76 & 0.52 \\
\textbf{ACSE (Inflated)} & \textbf{0.88} & \textbf{0.31} & \textbf{0.87} & \textbf{0.37} & \textbf{0.84} & \textbf{0.36} \\
\midrule
\textit{Absolute Gain} & \textit{+0.11} & \textit{-0.18} & \textit{+0.12} & \textit{-0.18} & \textit{+0.08} & \textit{-0.16} \\
\bottomrule
\end{tabular}
}
\caption{Comparative analysis of ACSE against Base SE score.}
\label{tab:ablation_inflation}
\end{table}

\begin{table}[t]
\centering
\resizebox{\columnwidth}{!}{%
\begin{tabular}{l|cc|cc|cc}
\toprule
\multirow{2}{*}{\textbf{Method}} & \multicolumn{2}{c|}{$\alpha=0.05$ (Strict)} & \multicolumn{2}{c|}{$\alpha=0.10$ (Standard)} & \multicolumn{2}{c}{$\alpha=0.20$ (Permissive)} \\
 & ECE $\downarrow$ & Brier $\downarrow$ & ECE $\downarrow$ & Brier $\downarrow$ & ECE $\downarrow$ & Brier $\downarrow$ \\
\midrule
SU & 0.17 & 0.25 & 0.16 & 0.24 & 0.14 & 0.22 \\
DDCRP-CP & 0.10 & 0.15 & 0.09 & 0.14 & 0.07 & 0.12 \\
CAP & 0.07 & 0.12 & 0.06 & 0.11 & 0.05 & 0.10 \\
\midrule
\textbf{ACSE (Ours)} & \textbf{0.06} & \textbf{0.10} & \textbf{0.04} & \textbf{0.09} & \textbf{0.03} & \textbf{0.08} \\
\bottomrule
\end{tabular}
}
\caption{Probabilistic calibration analysis across varying $\alpha$.}
\label{tab:probabilistic_calibration}
\end{table}

\begin{table}[t]
    \centering
    \resizebox{0.5\textwidth}{!}{
    \begin{tabular}{l|ccc|ccc|ccc}
        \toprule
        \multirow{2}{*}{\textbf{Dataset}}
        & \multicolumn{3}{c|}{\textbf{ACSE vs.\ KLE}}
        & \multicolumn{3}{c|}{\textbf{ACSE vs.\ SAR}}
        & \multicolumn{3}{c}{\textbf{ACSE vs.\ SIMBA}} \\
        & AUROC & FPR@95 & AUARC
        & AUROC & FPR@95 & AUARC
        & AUROC & FPR@95 & AUARC \\
        \midrule
        TriviaQA & 0.004 & 0.002 & 0.006 & 0.001 & 0.001 & 0.002 & 0.002 & 0.001 & 0.004 \\
        SamSum   & 0.018 & 0.015 & 0.021 & 0.006 & 0.004 & 0.009 & 0.011 & 0.009 & 0.016 \\
        \bottomrule
    \end{tabular}
    }
	    \caption{Paired significance tests ($p$-values) for Table~\ref{tab:recent_baselines_llama3}.}
    \label{tab:recent_baselines_significance}
\end{table}


\section{Experimental Setup and Metrics}
\label{exp_setup}
We implement all methods in PyTorch v2.1.2 and HuggingFace Transformers v4.40.0, using sentence-transformers to embed generated responses. 
For valid comparison, the uncalibrated SU baseline is post-hoc calibrated with isotonic regression to map raw scores to observed error frequencies. We use Llama-3-8B as our primary model on TriviaQA as our primary dataset by default. All experiments were run on an AMD Ryzen 9 5900X (12-core) workstation with a single NVIDIA RTX 3060 GPU (16GB VRAM).

\subsection{Evaluation Metrics}
\label{metrics}
Let $\{(x,y)_i \mid x \in \mathcal{D}_{\mathrm{val}}\ ,\ y\in\mathcal{Y}(x)\}_{i=1}^N$ be the set of validation prompt-response pairs, $\hat u(x)$ be an uncertainty score, and accept (answer) each prompt when $\hat u(x)\le\hat\tau$. We compute the \emph{acceptance rate} as, 
\begin{equation}
\mathrm{AccRate}(\hat\tau)
=\frac{1}{N}\sum_{i=1}^N \mathds{1}{\{\hat u(x_i)\le\hat\tau\}}\ .
\label{}
\end{equation}
To avoid ambiguity, we distinguish \emph{conformal set coverage} (membership of the correct response in a prediction set) from \emph{acceptance rate}, i.e., the fraction of prompts on which ACSE returns an answer rather than abstaining.

The empirical selective risk is also defined as conditional error among accepted prompts, indicating the fraction of accepted prompts whose returned responses are incorrect (hallucinating accepted prompts) with binary error labels $\{E_i=e(x,\tilde y)\}_{i=1}^N$ of returned responses $\tilde y$, and computed as,
\begin{equation}
\hat{R}(\hat\tau)=\frac{\sum_{i=1}^N E_i\,\mathds{1}{\{\hat u(x_i)\le\hat\tau\}}}
{\sum_{i=1}^N \mathds{1}{\{\hat u(x_i)\le\hat\tau\}}}\ .
\label{}
\end{equation}
For computing AUARC (Area under the Accuracy-Rejection curve), we calculate the accuracy among accepted prompts as,
\begin{equation}
\mathrm{Acc}(\hat\tau)=\frac{\sum_{i=1}^N (1-E_i)\mathds{1}{\{\hat u(x_i)\le\hat\tau\}}}
{\sum_{i=1}^N \mathds{1}{\{\hat u(x_i)\le\hat\tau\}}}\ ,
\label{}
\end{equation}
which measures the rate of the accepted prompts that have correct returned response.
The rejection rate is computed as,
\begin{equation}
\mathrm{Rej}(\hat\tau)=\frac{1}{N}\sum_{i=1}^N \mathds{1}{\{\hat u(x_i)>\hat\tau\}}\ .
\label{}
\end{equation}
For prediction set $\mathcal{C}_\alpha(\cdot)$, the empirical CP coverage (prediction response set coverage) is computed as, 
\begin{align}
\mathrm{R\text{-}Cov}_{1-\alpha}
&\;=\;
\frac{\sum\limits_{i=1}^{N}
\mathds{1}\!\left\{e_{i}=0\right\}\,
\mathds{1}\!\left\{y_{i}\in \mathcal C_\alpha(x_i)\right\}}
{\sum\limits_{i=1}^{N}\mathds{1}\!\left\{e_{i}=0\right\}} \\
&\;=\;
\frac{1}{N^R_0}\sum_{(j,i):\,e_{ji}=0}\mathds{1}\!\left\{y_{ji}\in \mathcal C_\alpha(x_j)\right\}\ ,
\label{eq:empirical_conformal_coverage}
\end{align}
where $N^R_0=|\{(j,i) \in \mathcal{D}_{\mathrm{test}}:\,e_{ji}=0\}|$ as the size of prompt-response pairs in the test set whose responses are correct.
The prompt-level coverage computes the fraction of test prompts with correct returned responses that are successfully accepted by the calibrated threshold, and is defined as, 
\begin{align}
\mathrm{P\text{-}Cov}_{1-\alpha}
&\;=\;
\frac{\sum\limits_{i=1}^{N}
\mathds{1}\!\left\{E_{i}=0\right\}\,
\mathds{1}\!\left\{\hat{u}(x_i) \le \hat{\tau}\right\}}
{\sum\limits_{i=1}^{N}\mathds{1}\!\left\{E_{i}=0\right\}} \\
&\;=\;
\frac{1}{N^P_0}\sum_{i:\,E_{i}=0}\mathds{1}\!\left\{\hat{u}(x_i) \le \hat{\tau}\right\}\ ,
\label{eq:empirical_prompt_coverage_v1}
\end{align}
where $N^P_0=|\{i \in \mathcal{D}_{\mathrm{test}}:\,E_{i}=0\}|$ as the size of set of prompts in the test set whose returned responses are correct. It empirically verifies whether our method meets the theoretical guarantee of covering the accepted prompts with correct returned responses with a probability of at least $1-\alpha$.
Then, the average prediction set size (APS) is computed as,
\begin{equation}
\mathrm{APS}=\frac{1}{N}\sum_{i=1}^N |\mathcal{C}_\alpha(x_i)|\ .
\label{}
\end{equation}
To compute SSCV (Size-Stratified Coverage Violation), let $|\mathcal{C}_\alpha(x_i)|$ be the prediction set size for each prompt and partition the test prompts into $B$ disjoint strata $\{\mathcal{G}_b\}_{b=1}^B$ by fixed ranges of their prediction set size. We report SSCV as the worst-case shortfall across coverage strata $\mathrm{Cov}_b ={1}/{|\mathcal{G}_b|}\sum_{i\in\mathcal{G}_b}\mathds{1}{\{y_i\in\mathcal{C}_\alpha(x_i)\}}$ as,
\[
\mathrm{SSCV}\;=\;\max_{b\in[B]}\Big( (1-\alpha)-\mathrm{Cov}_b \Big)_+\ ,
\]
where $(\cdot)_+=\max\{\cdot,0\}$ and $\mathrm{Cov}_b$ is the stratum-wise empirical coverage.


\section{Additional Experimental Results}
\label{additional_res}

\paragraph{Datasets.}
We additionally evaluate on SamSum~\cite{gliwa2019samsum} to include human-written dialogues paired with abstractive summaries, which enables evaluation of conversational summarization quality.

\paragraph{LLM Models.}
In addition to the 7B-parameter models, we evaluate three 8B-parameter models: Llama-3-8B~\cite{kassianik2025llama}, Qwen-3-8B~\cite{zheng2026empirical}, and Granite-3.1-8B~\cite{granite2024granite}.

\paragraph{Baselines.}
We further compare ACSE against SAR~\cite{duan2024shifting}, KLE~\cite{nikitin2024kernel}, and SIMBA~\cite{bhattacharjya2025simba}, alongside the baselines used in the main experiments.

\paragraph{Discriminative Performance across Models and Tasks.} 
To strengthen timeliness and generalizability, we extend our evaluation beyond QA tasks to include the summarization benchmark SamSum where ACSE consistently outperforms all baselines on the corresponding discriminative metrics. We further conduct same-protocol experiments using more recent models including Llama-3-8B, Qwen-3-8B, and Granite-3.1-8B, against additional baselines such as KLE, SAR, and SIMBA. SAR estimates uncertainty through relevance-weighted semantic consistency across generated responses. On the other hand, KLE captures fine-grained uncertainty using kernel-based semantic similarity distributions while SIMBA improves similarity aggregation. ACSE, in contrast, further addresses cluster brittleness and conformalizes the adjusted uncertainty score to provide finite-sample guarantees. As shown in Table~\ref{tab:recent_baselines_llama3}, ACSE remains the strongest method across both datasets. Gains are observed on SamSum, where ACSE improves AUROC to $0.807$ and consistently lowers FPR@$95$ to $0.447$ over all baselines. As shown in Table~\ref{tab:model_generalization_rebuttal}, ACSE remains the strongest method across all three architectures, achieving consistent improvements over KLE on TriviaQA, e.g., AUROC $0.872$ (Llama-3-8B), $0.869$ (Qwen-3-8B), and $0.863$ (Granite-3.1-8B), along with corresponding reductions in FPR@$95$.

\paragraph{Base SE vs. ACSE Performance Analysis.}
This ablation isolates the performance contribution of the adaptive inflation mechanism by benchmarking base Semantic Entropy (SE) against ACSE across three datasets. As shown in Table~\ref{tab:ablation_inflation}, adaptive inflation is the primary driver of improved error detection, yielding significant absolute AUROC increases up to $+0.12$ on CoQA, and a substantial reduction in FPR@$95$ across all benchmarks. Most notably, the inflation mechanism identifies overconfident errors missed by uninflated signals, reducing the FPR@$95$ by $0.18$ on TriviaQA and CoQA. This gap confirms that raw entropy scores often fails to detect unstable semantic clusters; by leveraging semantic features to inflate uncertainty values for brittle responses, ACSE successfully prevents incorrect answers from passing the conformal threshold.

\paragraph{Probabilistic Model Calibration.} This evaluation assesses how well uncertainty scores reflect the actual likelihood of correctness using Expected Calibration Error (ECE)~\cite{guo2017calibration} and Brier Score. As shown in Table~\ref{tab:probabilistic_calibration}, ACSE significantly outperforms all baselines across all miscoverage levels. For example, ACSE reduces ECE to $0.03$ in the permissive region, a nearly fivefold improvement the overconfident SU baseline. This performance extends to Brier scores, which ACSE consistently maintains below $0.10$, confirming that semantic-level inflation provides a robust confidence signal regardless of the specified tolerance.

\paragraph{Statistical Significance Analysis.}
To complement the mean$\pm$std results, we additionally perform paired significance tests on the main baseline comparison. Since all methods are evaluated on the same prompts, we use paired bootstrap tests over prompts for the primary metrics: AUROC, FPR@95, and AUARC. This allows us to verify that the gains of ACSE over the strongest recent baselines are not explained by run-to-run variation alone.
Paired statistical significance results for Table~\ref{tab:recent_baselines_llama3} are shown in Table~\ref{tab:recent_baselines_significance}. Using a paired bootstrap test over prompts, the gains of ACSE over the strongest baselines remain statistically significant on the main metrics. On TriviaQA, improvements over both KLE and SIMBA are significant across AUROC, FPR@95, and AUARC, with all $p<0.01$. On SamSum, the gains are smaller but remain significant, with all $p<0.02$. These results strengthen the conclusion that the improvements of ACSE are not explained by run-to-run variation alone.


\begin{table}[t]
    \centering
    \resizebox{\columnwidth}{!}{%
    \begin{tabular}{l|c|c|l}
        \toprule
        \textbf{Region} & \textbf{Threshold ($\epsilon$)} & \textbf{AUROC} & \textbf{Impact Explanation} \\
        \midrule
        \multirow{2}{*}{Over-Fragmentation} & 0.10 & 0.72 & Splits paraphrases (High False Positives) \\
         & 0.20 & 0.78 & Fragments minor lexical variations \\
        \midrule
        \textbf{Optimal} & \textbf{0.35} & \textbf{0.83} & \textbf{Correctly groups semantic equivalents} \\
        \midrule
        \multirow{2}{*}{Under-Fragmentation} & 0.50 & 0.75 & Merges distinct but close meanings \\
         & 0.70 & 0.65 & Merges contradictions (Masks Uncertainty) \\
        \bottomrule
    \end{tabular}
    }
	    \caption{Sensitivity Analysis of the clustering threshold $\epsilon$.}
    \label{tab:sensitivity_threshold}
\end{table}

\begin{table}[t]
\centering
\resizebox{\columnwidth}{!}{%
\begin{tabular}{l|cc|cc}
\toprule
\multirow{2}{*}{\textbf{Configuration}} & \multicolumn{2}{c|}{\textbf{TriviaQA}} & \multicolumn{2}{c}{\textbf{CoQA}} \\
 & AUROC & FPR@95 & AUROC & FPR@95 \\
\midrule
\textbf{Full ACSE Framework} & \textbf{0.88} & \textbf{0.31} & \textbf{0.87} & \textbf{0.37} \\
\midrule
w/o Cluster Size Penalty & 0.87 & 0.32 & 0.86 & 0.38 \\
w/o Margin-to-threshold & 0.87 & 0.33 & 0.86 & 0.39 \\
w/o Cluster Dispersion & 0.86 & 0.34 & 0.85 & 0.40 \\
w/o Centroid Distance & 0.85 & 0.35 & 0.84 & 0.42 \\
w/o Cluster-membership Entropy & 0.82 & 0.438  & 0.81 & 0.45 \\
\bottomrule
\end{tabular}
}
\caption{Cluster robustness feature ablation analysis.}
\label{tab:ablation_features}
\end{table}

\begin{table}[t]
\centering
\resizebox{0.8\columnwidth}{!}{%
\begin{tabular}{c|cc}
\toprule
No. of Response Samples & AUROC & AUARC \\ 
\midrule
$n = 4$  & 0.805 & 0.802 \\ 
$n = 7$  & 0.862 & 0.837 \\ 
\midrule
$\mathbf{n = 10}$ & $\mathbf{0.883}$ & $\mathbf{0.851}$ \\ 
\midrule
$n = 13$ & 0.886 & 0.851 \\ 
$n = 16$ & 0.887 & 0.851 \\ 
\bottomrule
\end{tabular}
}
\caption{Sample Size versus ACSE Discriminative Performance.}
\label{tab:sample_size_sensitivity}
\end{table}

\begin{figure}[t]
    \centering
    \begin{subfigure}[b]{0.49\columnwidth}
        \centering
        \includegraphics[width=\linewidth]{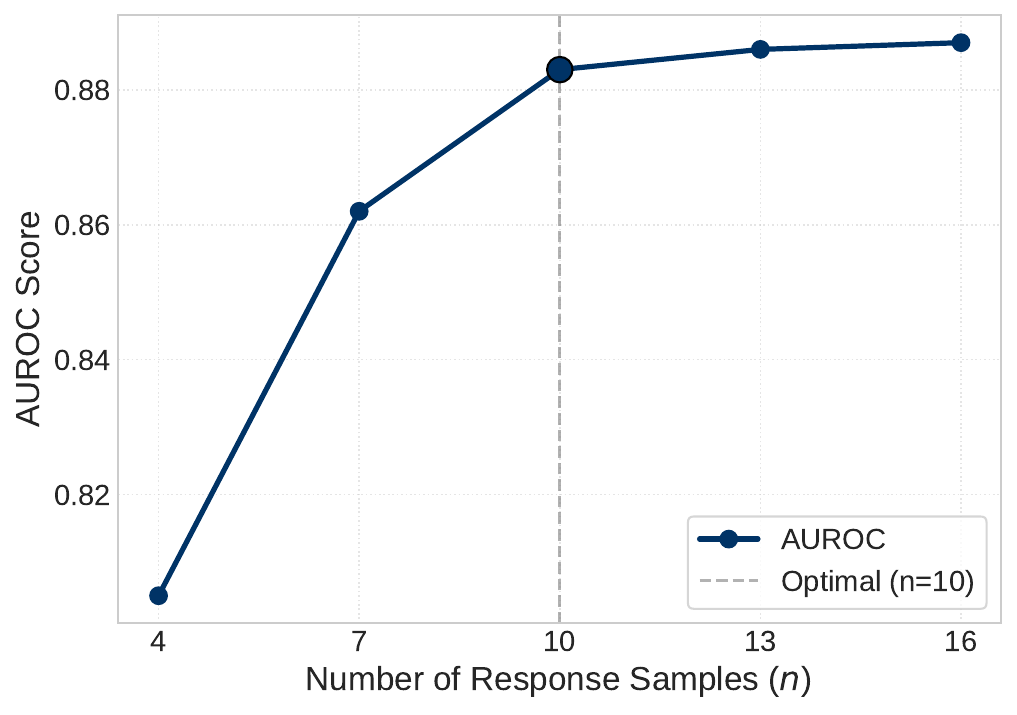}
        \caption{AUROC vs. Sample Size}
    \end{subfigure}
    \hfill
    \begin{subfigure}[b]{0.49\columnwidth}
        \centering
        \includegraphics[width=\linewidth]{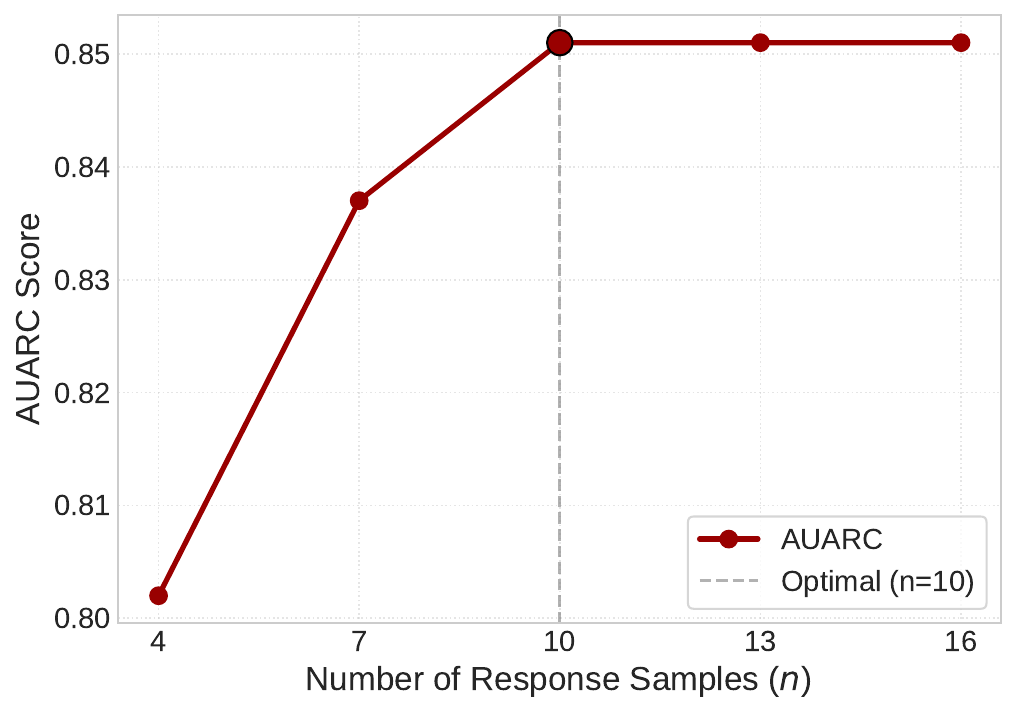}
        \caption{AUARC vs. Sample Size}
    \end{subfigure}
    \caption{Sensitivity analysis of the response sample size $n$.}
    \label{fig:sample_size_sensitivity}
\end{figure}

\begin{figure}[t]
    \centering
    \begin{subfigure}[b]{0.49\columnwidth}
        \centering
        \includegraphics[width=\linewidth]{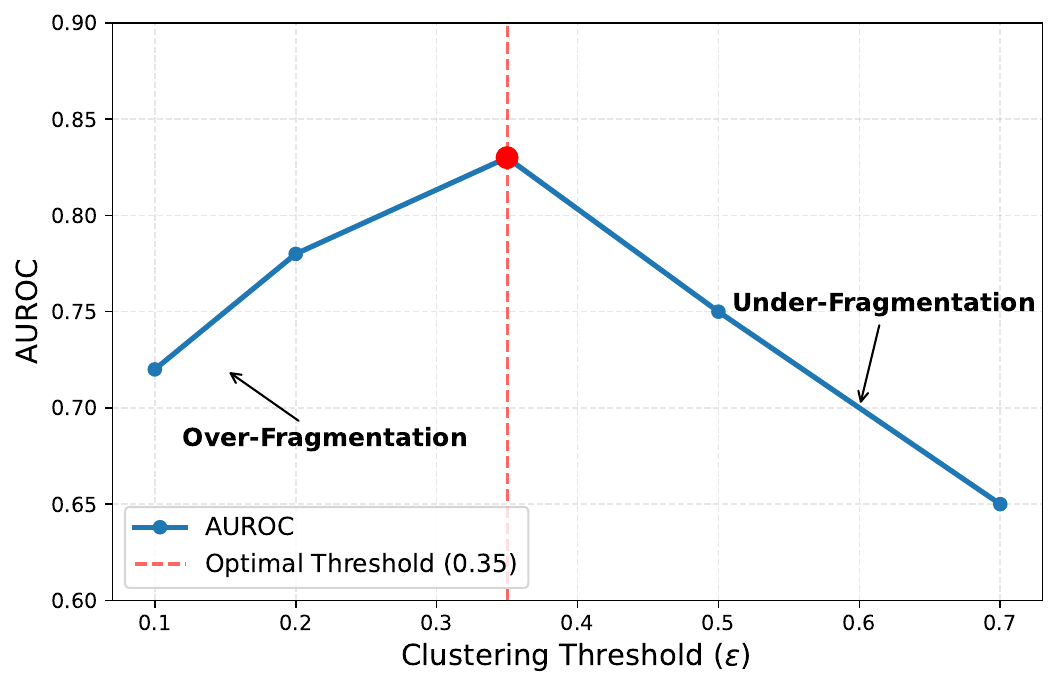}
        \caption{Clustering threshold $\epsilon$}
        \label{fig:threshold_sensitivity}
    \end{subfigure}
    \hfill
    \begin{subfigure}[b]{0.49\columnwidth}
        \centering
        \includegraphics[width=\linewidth]{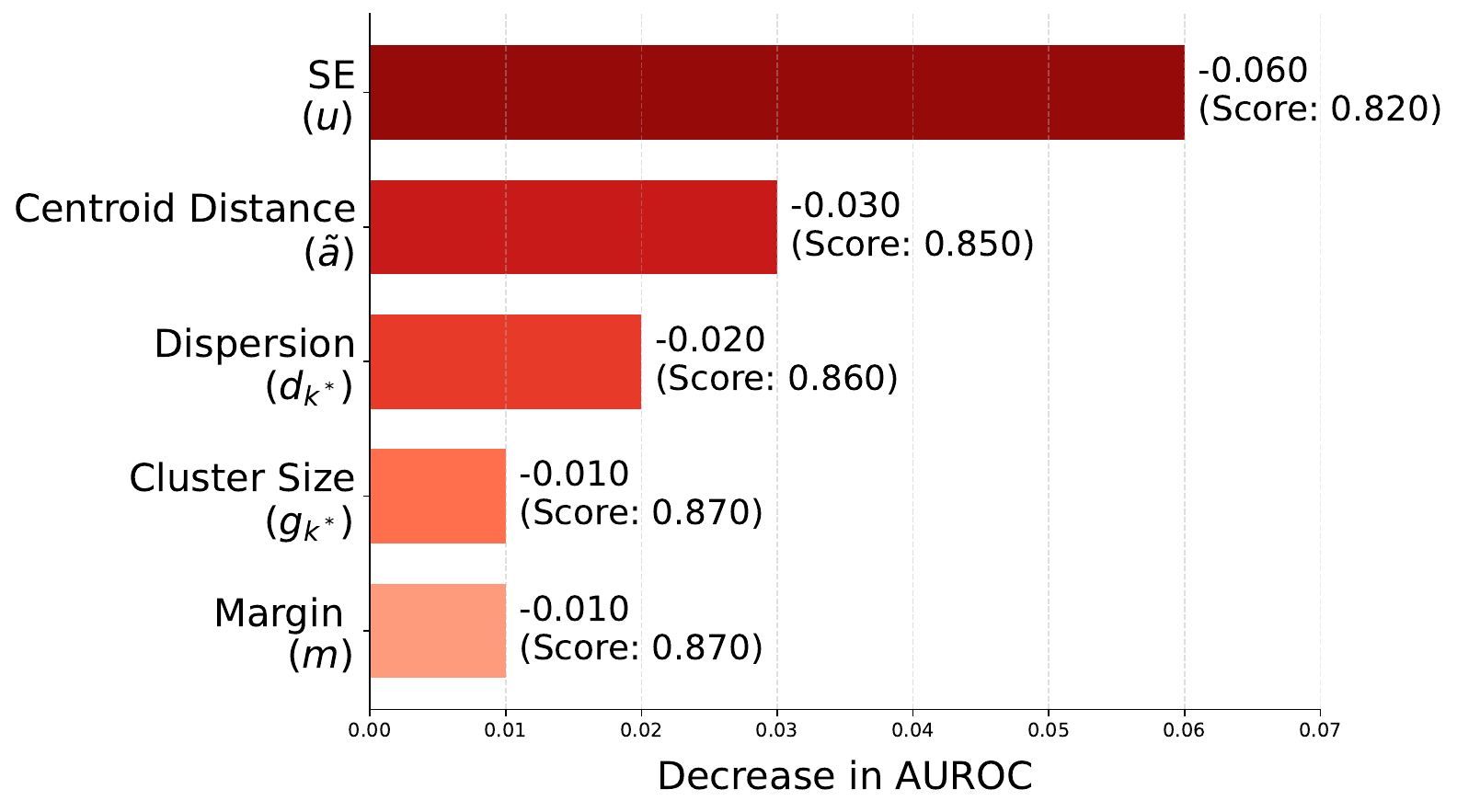}
        \caption{Cluster feature importance}
        \label{fig:ablation_features}
    \end{subfigure}
    \caption{Sensitivity analysis on clustering threshold $\epsilon$ and cluster robustness features (via a leave-one-out ablation study).}
    \label{}
\end{figure}

\begin{table*}[t]
    \centering
    \renewcommand{\arraystretch}{1.10}
    \resizebox{\textwidth}{!}{%
    \begin{tabular}{c l
        *{4}{c}
        *{4}{c}
        *{4}{c}
    }
        \toprule
        \multirow{2}{*}{}
        & \multirow{2}{*}{\textbf{Metric}}
        & \multicolumn{4}{c}{\textbf{DDCRP-CP}}
        & \multicolumn{4}{c}{\textbf{CAP}}
        & \multicolumn{4}{c}{\textbf{ACSE (Ours)}} \\
        \cmidrule(lr){3-6} \cmidrule(lr){7-10} \cmidrule(lr){11-14}
        &
        & \textbf{10\%} & \textbf{20\%} & \textbf{30\%} & \textbf{40\%}
        & \textbf{10\%} & \textbf{20\%} & \textbf{30\%} & \textbf{40\%}
        & \textbf{10\%} & \textbf{20\%} & \textbf{30\%} & \textbf{40\%} \\
        \midrule

        \multirow{3}{*}{\rotatebox[origin=c]{90}{\textbf{Discr.}}}
        & AUROC
        & $.788\pm.007$ & $.794\pm.007$ & $.800\pm.006$ & $.805\pm.006$
        & $.781\pm.008$ & $.788\pm.007$ & $.794\pm.007$ & $.799\pm.006$
        & $\mathbf{.850\pm.005}$ & $\mathbf{.857\pm.005}$ & $\mathbf{.862\pm.005}$ & $\mathbf{.866\pm.004}$ \\
        & FPR@95
        & $.482\pm.013$ & $.473\pm.012$ & $.466\pm.011$ & $.460\pm.011$
        & $.497\pm.014$ & $.486\pm.013$ & $.476\pm.012$ & $.469\pm.012$
        & $\mathbf{.366\pm.010}$ & $\mathbf{.357\pm.010}$ & $\mathbf{.351\pm.009}$ & $\mathbf{.346\pm.009}$ \\
        & AUARC
        & $.728\pm.008$ & $.734\pm.008$ & $.739\pm.007$ & $.743\pm.007$
        & $.722\pm.008$ & $.729\pm.008$ & $.734\pm.007$ & $.739\pm.007$
        & $\mathbf{.824\pm.006}$ & $\mathbf{.828\pm.006}$ & $\mathbf{.831\pm.006}$ & $\mathbf{.833\pm.006}$ \\
        \midrule

        \multirow{6}{*}{\rotatebox[origin=c]{90}{\textbf{Conformal}}}
        & R-Cov
        & $.887\pm.006$ & $.895\pm.006$ & $.902\pm.005$ & $.908\pm.005$
        & $.878\pm.006$ & $.885\pm.006$ & $.891\pm.006$ & $.896\pm.005$
        & $\mathbf{.904\pm.005}$ & $\mathbf{.912\pm.005}$ & $\mathbf{.918\pm.005}$ & $\mathbf{.923\pm.004}$ \\
        & APS
        & $1.36\pm.03$ & $1.33\pm.03$ & $1.30\pm.02$ & $1.28\pm.02$
        & $1.25\pm.03$ & $1.23\pm.03$ & $1.21\pm.02$ & $1.20\pm.02$
        & $\mathbf{1.14\pm.02}$ & $\mathbf{1.13\pm.02}$ & $\mathbf{1.12\pm.02}$ & $\mathbf{1.11\pm.02}$ \\
        & SSCV
        & $.066\pm.004$ & $.062\pm.004$ & $.059\pm.003$ & $.056\pm.003$
        & $.056\pm.004$ & $.053\pm.003$ & $.050\pm.003$ & $.048\pm.003$
        & $\mathbf{.041\pm.003}$ & $\mathbf{.038\pm.003}$ & $\mathbf{.036\pm.003}$ & $\mathbf{.034\pm.002}$ \\
        & P-Cov
        & $.864\pm.006$ & $.871\pm.006$ & $.877\pm.005$ & $.882\pm.005$
        & $.882\pm.006$ & $.891\pm.006$ & $.898\pm.005$ & $.902\pm.005$
        & $\mathbf{.896\pm.005}$ & $\mathbf{.903\pm.005}$ & $\mathbf{.908\pm.005}$ & $\mathbf{.912\pm.004}$ \\
        & Acc.
        & $66.8\pm.7\%$ & $68.4\pm.7\%$ & $69.9\pm.6\%$ & $71.0\pm.6\%$
        & $69.2\pm.7\%$ & $7.6\pm.6\%$ & $71.9\pm.6\%$ & $72.8\pm.6\%$
        & $\mathbf{71.6\pm.6\%}$ & $\mathbf{72.8\pm.6\%}$ & $\mathbf{73.7\pm.5\%}$ & $\mathbf{74.4\pm.5\%}$ \\
        & Risk
        & $.108\pm.005$ & $.102\pm.005$ & $.096\pm.004$ & $.091\pm.004$
        & $.118\pm.005$ & $.111\pm.005$ & $.105\pm.004$ & $.101\pm.004$
        & $\mathbf{.101\pm.004}$ & $\mathbf{.097\pm.004}$ & $\mathbf{.094\pm.004}$ & $\mathbf{.092\pm.004}$ \\
        \bottomrule
    \end{tabular}%
    }
	    \caption{Calibration size sensitivity (Llama-3-8B, TriviaQA).}
    \label{tab:calibration_pool_merged_llama3}
\end{table*}

\begin{table}[t]
    \centering
    \resizebox{0.5\textwidth}{!}{
    \begin{tabular}{l|ccccc}
        \toprule
        \textbf{Weight Setting}
        & {AUROC}
        & {FPR@95}
        & {FPR@90}
        & {AUPR}
        & {AUARC} \\
        \midrule
        Uniform & $.857\pm.005$ & $.357\pm.010$ & $.307\pm.009$ & $.835\pm.005$ & $.828\pm.005$ \\
        Entropy & $.861\pm.005$ & $.351\pm.009$ & $.302\pm.009$ & $.838\pm.005$ & $.831\pm.005$ \\
        Geometry & $.859\pm.005$ & $.353\pm.009$ & $.304\pm.009$ & $.837\pm.005$ & $.830\pm.005$ \\
        Support & $.853\pm.006$ & $.362\pm.010$ & $.312\pm.009$ & $.832\pm.006$ & $.825\pm.006$ \\
        Margin  & $.855\pm.005$ & $.359\pm.010$ & $.309\pm.009$ & $.834\pm.005$ & $.827\pm.005$ \\
        \bottomrule
    \end{tabular}
    }
	    \caption{Discrimination sensitivity over brittleness weighting (Llama-3-8B, TriviaQA).}
    \label{tab:weight_ablation_discrimination_llama}
\end{table}

\begin{table}[t]
    \centering
    \resizebox{0.5\textwidth}{!}{
    \begin{tabular}{l|cccccc|cccccc}
        \toprule
        \multirow{2}{*}{\textbf{Weight Setting}} 
        & \multicolumn{6}{c|}{\textbf{TriviaQA}} 
        & \multicolumn{6}{c}{\textbf{SamSum}} \\
        & {R-Cov} & {APS} & {SSCV} & {P-Cov} & {Acc.} & {Risk}
        & {R-Cov} & {APS} & {SSCV} & {P-Cov} & {Acc.} & {Risk} \\
        \midrule
        Uniform  & 0.920 & 1.10 & 0.035 & 0.911 & 73.9\% & 0.093 & 0.909 & 1.14 & 0.040 & 0.904 & 72.2\% & 0.095 \\
        Entropy  & 0.922 & 1.11 & 0.034 & 0.913 & 73.3\% & 0.091 & 0.910 & 1.15 & 0.039 & 0.905 & 72.0\% & 0.093 \\
        Geometry & 0.921 & 1.09 & 0.033 & 0.912 & 73.7\% & 0.092 & 0.912 & 1.13 & 0.038 & 0.906 & 72.5\% & 0.094 \\
        Support  & 0.917 & 1.12 & 0.037 & 0.908 & 72.9\% & 0.095 & 0.906 & 1.16 & 0.042 & 0.901 & 71.8\% & 0.097 \\
        Margin   & 0.918 & 1.11 & 0.036 & 0.909 & 73.1\% & 0.094 & 0.907 & 1.15 & 0.041 & 0.902 & 71.9\% & 0.096 \\
        \bottomrule
    \end{tabular}
    }
	    \caption{Conformal metrics sensitivity over brittleness weighting at $\alpha=0.10$.}
    \label{tab:weight_ablation_conformal}
\end{table}

\begin{table}[t]
    \centering
    \resizebox{0.5\textwidth}{!}{
    \begin{tabular}{l|ccccc}
        \toprule
        \textbf{Encoder}
        & {AUROC}
        & {FPR@95}
        & {FPR@90}
        & {AUPR}
        & {AUARC} \\
        \midrule
        All-MiniLM & $0.857\pm0.005$ & $0.357\pm0.010$ & $0.307\pm0.009$ & $0.835\pm0.005$ & $0.828\pm0.005$ \\
        MPNet  & $0.860\pm0.005$ & $0.352\pm0.009$ & $0.303\pm0.009$ & $0.838\pm0.005$ & $0.831\pm0.005$ \\
        E5     & $0.855\pm0.005$ & $0.360\pm0.010$ & $0.310\pm0.009$ & $0.833\pm0.005$ & $0.826\pm0.005$ \\
        BGE    & $0.858\pm0.005$ & $0.355\pm0.010$ & $0.306\pm0.009$ & $0.836\pm0.005$ & $0.829\pm0.005$ \\
        \bottomrule
    \end{tabular}
    }
	    \caption{Encoder Choice Sensitivity (Llama-3-8B, TriviaQA).}
    \label{tab:encoder_discrimination_llama}
\end{table}

\begin{table}[!t]
    \centering
    \resizebox{0.5\textwidth}{!}{
    \begin{tabular}{l|cccccc|cccccc}
        \toprule
        \multirow{2}{*}{\textbf{Encoder}} 
        & \multicolumn{6}{c|}{\textbf{TriviaQA}} 
        & \multicolumn{6}{c}{\textbf{SamSum}} \\
        & {R-Cov} & {APS} & {SSCV} & {P-Cov} & {Acc.} & {Risk}
        & {R-Cov} & {APS} & {SSCV} & {P-Cov} & {Acc.} & {Risk} \\
        \midrule
        MiniLM & 0.920 & 1.10 & 0.035 & 0.911 & 73.9\% & 0.093 & 0.909 & 1.14 & 0.040 & 0.904 & 72.2\% & 0.095 \\
        MPNet  & 0.922 & 1.09 & 0.034 & 0.913 & 74.2\% & 0.091 & 0.912 & 1.12 & 0.038 & 0.907 & 72.6\% & 0.093 \\
        E5     & 0.918 & 1.11 & 0.036 & 0.910 & 73.7\% & 0.094 & 0.907 & 1.15 & 0.041 & 0.903 & 72.0\% & 0.096 \\
        BGE    & 0.921 & 1.09 & 0.034 & 0.912 & 74.1\% & 0.092 & 0.911 & 1.12 & 0.038 & 0.906 & 72.5\% & 0.093 \\
        \bottomrule
    \end{tabular}
    }
	\caption{Encoder Choice Sensitivity Analysis at $\alpha=0.10$.}
    \label{tab:encoder_conformal}
\end{table}


\subsection{Ablation Studies}
\label{ablation}
We conduct comprehensive ablation studies to examine ACSE's performance, covering brittleness feature contributions, clustering threshold and sampling size sensitivities, calibration size behavior, and robustness to implementation choices such as weighting schemes and sentence encoders.

\paragraph{Sensitivity to Clustering Threshold.}
This experiment evaluates the robustness of ACSE to the semantic clustering threshold $\epsilon$, which controls cluster resolution by setting the minimum similarity required to group responses. By varying $\epsilon$ from $0.10$ to $0.70$, we investigated the trade-off between over-fragmentation and under-fragmentation. As detailed in Table~\ref{tab:sensitivity_threshold} and Figure~\ref{fig:threshold_sensitivity}, the discriminative performance follows a clear bell-shaped curve that peaks at an optimal threshold of $\epsilon=0.35$ with an AUROC of $0.83$.  While low thresholds ($\epsilon \le 0.20$) are overly restrictive and high thresholds ($\epsilon \ge 0.50$) mask uncertainty by merging distinct meanings, the peak at 0.35 successfully captures linguistic invariances for robust uncertainty estimation.

\paragraph{Contribution of Cluster Brittleness Features.}
This study uses a leave-one-out ablation to quantify the contribution of each semantic feature within the inflation function $\lambda$ towards discriminative performance. 
As shown in Table~\ref{tab:ablation_features} and Figure~\ref{fig:ablation_features}, Cluster-membership Entropy is the most critical feature; its removal results in $0.06$ AUROC drop, confirming that mapping ambiguity is the strongest predictor of factual errors. Centroid Distance and Cluster Dispersion follow as secondary signals, indicating that the internal tightness of a cluster provides essential confidence information that raw entropy lacks. Conversely, the minimal impact of removing the Cluster Size Penalty suggests that the structural arrangement and stability of samples are more informative than their raw count. 
The results show the marginal effect of removing each feature from the full ACSE system that measures each feature's contribution in the presence of the others, not its standalone importance or optimal weight, e.g., the larger drop from removing SE indicates it is the strongest signal, not that other features should be down-weighted. 

\paragraph{Sample Size Sensitivity Analysis.}
This experiment evaluates the impact of response sample size ($n$) on the framework's ability to distinguish between hallucinations and correct responses. By varying $n$ from $4$ to $16$, we aimed to identify the elbow point where the increase in computational cost no longer yields significant performance gains. As shown in Table~\ref{tab:sample_size_sensitivity} and Figure~\ref{fig:sample_size_sensitivity}, discriminative performance significantly improves as the sample size increases from $n=4$ to $n=10$.  However, beyond $n=10$, both metrics plateau; AUARC remains constant at $0.851$, and increases in AUROC become marginal. This confirms that $10$ samples provide the optimal trade-off between semantic resolution and inference cost.

\paragraph{Calibration Size Sensitivity.} 
The calibration set should satisfy exchangeability assumption (Section~\ref{conformal_cse}) which is less restrictive than being from the same distribution. To measure sensitivity to the calibration set size, we include an ablation study in Table~\ref{tab:calibration_pool_merged_llama3} varying the calibration fraction from 10\% to 40\%. ACSE consistently provides the strongest overall tradeoff between coverage, compact prediction sets, acceptance, and selective risk across all calibration sizes. At $10\%$ calibration, ACSE achieves higher prompt coverage than CAP and DDCRP-CP ($0.904$ vs.\ $0.889$ and $0.872$), together with the highest acceptance rate ($72.8\%$ vs.\ $70.2\%$ and $68.1\%$). A similar trend is observed  for discrimination and safety metrics. ACSE consistently outperforms both CAP and DDCRP-CP in AUROC, FPR@95, and AUARC across all calibration fractions. At $10\%$ calibration, ACSE improves AUROC to $0.861$, compared with $0.792$ for CAP while reducing FPR@95 to $0.352$ from $0.486$ respectively. The same ordering remains at $20\%$--$40\%$, with only gradual gains as the calibration pool increases. These results indicate that ACSE's advantage is not an artifact of extensive calibration, but remains visible even under substantially smaller calibration budgets.

\paragraph{Weight Sensitivity of Brittleness Features.} 
In the experiments, we use \(\mathbf w=\langle\frac15,\frac15,\frac15,\frac15,\frac15\rangle\) as a deliberate design choice (not as learned parameters fit to labels). Since all five brittleness components are normalized to $[0,1]$, equal weights provide a fair and neutral choice without privileging any feature a priori. This does not imply that equal weighting is required; different deployments may reasonably emphasize different semantic risks (e.g., ambiguity vs. overconfidence vs. sparse support) by user-controlled prioritization over features. Thus, we additionally conduct an ablation on the weight sensitivity in five settings: \textit{Uniform}~$\langle\frac15,\frac15,\frac15,\frac15,\frac15\rangle$, \textit{Entropy}~$\langle\frac26,\frac16,\frac16,\frac16,\frac16\rangle$, \textit{Geometry}~$\langle\frac17,\frac27,\frac27,\frac17,\frac17\rangle$, \textit{Support}~$\langle\frac16,\frac16,\frac16,\frac26,\frac16\rangle$, and \textit{Margin}~$\langle\frac16,\frac16,\frac16,\frac16,\frac26\rangle$, to evaluate robustness under different reasonable weight choices.

As reported in Table~\ref{tab:weight_ablation_discrimination_llama}, ACSE remains stable under all tested weight settings on both TriviaQA and SamSum. On SamSum AUROC ranges from $0.796$ to $0.803$ and FPR@$95$ from $0.448$ to $0.461$. Entropy and Geometry provide slight improvements on some metrics, but the differences are small overall. This suggests that ACSE’s discrimination and safety gains do not depend on a fragile or narrowly tuned weighting choice. A similar pattern holds for the conformal metrics. As also reported in Table~\ref{tab:weight_ablation_conformal}, on TriviaQA prompt coverage stays within $0.908$ to $0.913$ and selective risk within $0.091$ to $0.095$ across all settings, while on SamSum prompt coverage ranges from $0.901$ to $0.906$ and risk from $0.093$ to $0.097$. APS and SSCV also change only marginally. Although Entropy and Geometry are slightly favorable on a few metrics, Uniform remains consistently competitive. These results indicate that ACSE is robust to moderate changes in the brittleness weight vector, and its gains do not rely on fragile hyperparameter tuning.

\paragraph{Sentence Encoder Sensitivity.} 
Encoder quality can affect discriminative performance, but not conformal validity which only requires the same deterministic scoring pipeline in calibration and inference. We conduct an ablation on different sentence encoders based on discriminative performance and conformal analysis. ACSE relies only on a semantically meaningful similarity geometry, while soft assignments reduce brittleness near cluster boundaries. 
As reported in Table~\ref{tab:encoder_discrimination_llama}, ACSE remains stable across all tested sentence encoders on both TriviaQA and SamSum. MPNet achieves the strongest overall discrimination performance, with AUROC of  $0.803$ on SamSum, but the gains over the MiniLM default are small, amounting to only $0.004$, respectively. BGE remains very close to MPNet, while E$5$ is slightly weaker across most metrics. A similar pattern is observed in Table~\ref{tab:encoder_conformal} for the conformal metrics. MPNet and BGE provide slightly stronger conformal performance, but MiniLM remains highly competitive, with only small absolute differences such as about $0.002$ in P-Cov and $0.002$ in Risk relative to the strongest encoder settings. These results indicate that ACSE is robust to moderate changes in the sentence encoder, and that its gains do not depend on a narrowly chosen embedding backbone.


\end{document}